\documentclass[twoside,11pt]{article}

\usepackage{amsmath}
\usepackage{amssymb}
\usepackage{amsthm}
\usepackage{indentfirst}
\usepackage[hidelinks]{hyperref}
\usepackage{paralist}
\usepackage{geometry}
\usepackage{setspace,lipsum}
\geometry{margin=1in}
\usepackage[title]{appendix}
\usepackage{color}
\usepackage{multimedia}
\usepackage{multirow}
\usepackage{bbm}
\usepackage{graphics}
\usepackage{graphicx}
\usepackage{booktabs}
\usepackage{romannum}
\usepackage{grffile}
\usepackage{float}
\usepackage{url}
\usepackage{mathabx}
\usepackage{supertabular}
\usepackage{rotating}
\usepackage{pdflscape}
\usepackage{verbatim}
\usepackage{listings}
\usepackage{xcolor}
\usepackage{multicol}
\usepackage[shortlabels]{enumitem} 
\usepackage{caption,subcaption}
\usepackage{epstopdf}
\usepackage{algorithm}
\usepackage{algpseudocode}
\usepackage{tikz} 
\usepackage[round]{natbib}
\usepackage{authblk}
\usepackage{etoolbox}
\graphicspath{{figure/}}
\linespread{1.5}

\newcommand{\bm}[1]{\boldsymbol{#1}}
\newcommand{\dd}{\mathrm{d}}

\newcommand{\E}{\mathbb{E}}

\newcommand{\g}{\mathcal{G}}

\newtheorem{theorem}{Theorem}[section]
\newtheorem{lemma}[theorem]{Lemma}
\newtheorem{remark}[theorem]{Remark}

\algblock{Input}{EndInput}
\algnotext{EndInput}
\algblock{Output}{EndOutput}
\algnotext{EndOutput}
\newcommand{\Desc}[2]{\State \makebox[5em][l]{#1}#2}

\renewcommand{\theequation}{\thesection.\arabic{equation}}
\renewcommand{\thetheorem}{\thesection.\arabic{theorem}}
\renewcommand{\thetheorem}{\thesection.\arabic{theorem}}
\renewcommand{\thelemma}{\thesection.\arabic{theorem}}  
\renewcommand{\theremark}{\thesection.\arabic{theorem}} 
\renewcommand{\thealgorithm}{\thesection.\arabic{algorithm}}

\usetikzlibrary{fit,positioning}

\newcommand{\Mj}{(\sum_{j=1}^{m} D_j D_j^\top)^{-1}}
\newcommand{\Vj}{(\sum_{j=1}^{m} C_j D_j)}
\newcommand{\Sj}{\sum_{j=1}^{m}}

\interfootnotelinepenalty=10000

\bibliographystyle{apalike}
\begin{document}

\pagenumbering{arabic}
\providecommand{\keywords}[1]
{
\small	
\textbf{\textit{Keywords:}} #1
}

\title{Sublinear Regret for a Class of Continuous-Time Linear--Quadratic Reinforcement Learning Problems}

\author{Yilie Huang \thanks{Department of Industrial Engineering and Operations Research, Columbia University, New York, NY 10027, USA. Email: yh2971@columbia.edu.}  ~ ~ ~  Yanwei Jia\thanks{Department of Systems Engineering and Engineering Management,  The Chinese University of Hong Kong, Hong Kong. Email: yanweijia@cuhk.edu.hk.} ~ ~ ~ Xun Yu Zhou\thanks{Department of Industrial Engineering and Operations Research \& Data Science Institute, Columbia University, New York, NY 10027, USA. Email: xz2574@columbia.edu.}}

\date{}
\maketitle

\let\thefootnote\relax
\footnotetext{Accepted for publication in \textit{SIAM Journal on Control and Optimization (2025)}.}

\begin{abstract}
\singlespacing
We study reinforcement learning (RL) for a class of continuous-time linear--quadratic (LQ) control problems for diffusions, where states are scalar-valued and running control rewards are absent but volatilities of the state processes depend on both state and control variables. We apply a model-free approach that relies neither on knowledge of  model parameters nor on their estimations, and devise an RL algorithm to learn the optimal policy parameter directly. Our main contributions include the introduction of an exploration schedule and a regret analysis of the proposed algorithm. We provide the convergence rate of the policy parameter to the optimal one, and prove that the algorithm achieves a regret bound of  $O(N^{\frac{3}{4}})$ up to a logarithmic factor, where $N$ is the number of learning episodes. We conduct a simulation study to validate the theoretical results and demonstrate the effectiveness and reliability of the proposed algorithm. We also perform numerical comparisons between our method and those of the recent model-based stochastic LQ RL studies adapted to the state- and control-dependent volatility setting, demonstrating a better performance of the former in terms of regret bounds.
\end{abstract}


\small{\textit{\textbf{Keywords:}} Continuous-Time RL, Stochastic Linear--Quadratic Control, Model-Free Policy Gradient, Regret Bounds, Exploration Scheduling, RL Algorithm}

\section{Introduction}
	\label{section_introduction}

    \setcounter{equation}{0}
    \setcounter{theorem}{0}
    
    \setcounter{algorithm}{0}

	Linear--quadratic (LQ) control, where the system dynamics are linear in the state and control variables while the rewards are quadratic in them,  takes up a center stage in classical model-based control theory when the model parameters are assumed to be given and known. The reason is twofold: LQ control can be solved explicitly and elegantly, and it can be used to approximate more complicated nonlinear control problems. 
	Detailed theoretical accounts in the continuous-time setting can be found in \cite{anderson2007optimal} for deterministic control (i.e., dynamics are described by ordinary differential equations) and in \cite{YZbook} for stochastic control (i.e., dynamics are governed by stochastic differential equations).
	
	Many real-world applications often present themselves with partially known or entirely unknown environments. Specifically in the LQ context, one may know that a problem is {\it structurally} LQ, namely the system responds linearly to state and control whereas the reward is quadratic (e.g. a variance is involved) in these variables, yet {\it without knowing some or any of the model parameters}. The so-called plug-in method has been traditionally used to solve such a problem, namely, one first estimates the model parameters based on observed data and then plugs in the estimated parameters and applies the classical optimal control theory to derive the solutions. Such an approach is {\it model-based/-driven} because it takes learning the model as its core mission. It is well known, however, that the plug-in method has significant drawbacks, especially in that optimal controls are typically very sensitive to the model parameters, yet estimating some of the parameters accurately when data are limited is a daunting, sometimes impossible,  task (e.g., the return rate of a stock \citep{merton1980estimating, luenberger1998investment}).
	
	Reinforcement learning (RL) has been developed to tackle complex control problems in largely unknown environments. Its successful applications range from strategic board games such as chess and Go \citep{silver2016mastering,silver2017mastering} to robotic systems \citep{gu2017deep,khan2020systematic}. However, RL has been predominantly studied for discrete-time, Markov decision processes (MDPs) with discrete state and control spaces, even though most real-life applications are inherently continuous-time with continuous state space and possibly continuous control space (e.g., autonomous driving, stock trading, and video game playing). More importantly, while one can turn a continuous-time problem into a discrete-time MDP upfront by time and state discretization, such an 
	approach is very sensitive to time step size and performs poorly with small time steps
	\cite{munos2006policy,tallec2019making,park2021time}.
	
	While there were studies directly on continuous-time RL, these had been  rare and far between \citep{baird1994reinforcement,doya2000reinforcement,vamvoudakis2010online,lee2021policy,
		kim2021hamilton} up to just recent years, overall lacking a systematic and unified theory. Starting with \cite{wang2020reinforcement} that introduces an entropy-regularized relaxed control framework for continuous-time RL, a series of subsequent papers \citep{jia2021policy, jia2021policypg, jia2023q} develop theories on policy evaluation, policy gradient, and $q$-learning respectively within this framework.
	This strand of research is characterized by focusing  on learning the optimal control policies directly without attempting to estimate or learn the model parameters,
	underlining a model-free (up to the unknown dynamics being governed by diffusion processes) and data-driven approach.
	The mathematical foundation of the entire theory is the martingale property of certain stochastic processes, the enforcement of which naturally leads to various temporal difference and actor--critic type of RL algorithms to train and learn $q$-functions, value functions, and optimal (stochastic) policies. Subsequently, there has been active follow-up research with various extensions and applications; see, e.g. \cite{huang2022achieving,dai2023learning,wang2023reinforcement,frikha2023actor,
		wei2023continuous,huang2024mean}. 
	
	A crucial question in RL is the convergence and regret bounds of RL algorithms that provide theoretical guidance and guarantee their effectiveness and reliability.
	For LQ problems, such theoretical results exist, for example, for deterministic systems \citep{bradtke1992reinforcement,fazel2018global,malik2019derivative} as well as systems with identically and independently distributed noises \citep{abbasi2011regret,abeille2018improved,
		cohen2018online,cohen2019learning,hambly2021policy,wang2021global,
		yang2019provably,zhou2023single,chen2023global,simchowitz2020naive,cassel2021online}, 
	covering  finite-horizon, infinite-horizon, and ergodic cases. These studies are nevertheless all for discrete-time models, with control not affecting the level of randomness in the state dynamics. Some of them, e.g. \cite{abbasi2011regret,abeille2018improved,hambly2021policy,wang2021global,zhou2023single}, design their algorithms based on policy gradient. However, the gradient representations therein rely on estimations of the drift parameters; hence, the methods are essentially model-based. In addition, the semidefinite programming formulation in \citep{cohen2018online,cohen2019learning} does not seem applicable to continuous-time systems.
	
	The algorithm proposed and analyzed in the present paper belongs to the general class of actor--critic algorithms originally put forward by \cite{konda1999actor}. Such algorithms for discrete-time systems have been studied in \cite{wu2020finite,xu2021doubly,cen2022fast}, and in particular, for ergodic LQ problems in \cite{yang2019provably,chen2023global} and for episodic linear MDPs in \cite{cai2020provably,zhong2024theoretical}. The ``optimal'' regret of these algorithms is mostly of the order $O(\sqrt{N})$, where $N$ is the number of episodes or timesteps. However, it is unclear whether they still work for the diffusion case where the volatility also depends on state and control. For general continuous-time diffusion environments, however, the aforementioned series of papers (\cite{wang2020reinforcement, jia2021policy, jia2021policypg, jia2023q}) and their follow-up study have not addressed the problems of convergence and regret bounds. Assuming oracle access to the policy gradient, \cite{giegrich2024convergence} establishes the exponential convergence of the policy gradient methods for the entropy regularized LQ problem. However, in our setting, the policy gradient needs to be estimated by samples using a model-free approach, leading to significant difficulty in analysis and losses in efficiency compared to the exact policy gradient. In sum, a model-free regret analysis
	remains a highly significant yet challenging open question due to  the stochastic approximation type of algorithms involved.
	
	Recently, there has been some progress on regret analysis for continuous-time stochastic LQ RL in \citep{basei2022logarithmic,szpruch2024optimal} that achieves sublinear regrets of their respective algorithms. Both papers assume that the diffusion coefficients are {\it constant} (independent of state and control), which is vital for their approaches to work. Again, in essence, these works are model-based because they apply either least-square or Bayesian methods to estimate model parameters and use the corresponding estimation errors to deduce the regret bounds. In particular, there is an intrinsic caveat of this approach applied to LQ problems: when estimating the model parameters, an unidentifiability issue may  arise (from optimal control policies being linear feedbacks of
	the state) and a special exploratory stage has to be carefully designed to avoid it; see e.g., \cite{szpruch2021exploration,basei2022logarithmic,szpruch2024optimal,xu2024regret}.
	In addition, to get one estimate of the model parameters, the required number of episodes and that of timesteps within each episode increase exponentially over the learning process, adding substantial computational and memory costs.\footnote{For example, given a total $N$ episodes of the learning budget, the algorithm in \cite{basei2022logarithmic} suggests to break it into several epoches, with the $k$-th epoch containing $2^k$ episodes and $2^{k/2}$ timesteps and to update a policy after having applied it for an epoch of episodes. As a result, although the total regret is proved to be $O(\log N)$, the storage and computational complexity increase exponentially in the number of epoches (which is the number of times of updating the policy parameters).}

	This paper endeavors to design an RL algorithm with a provable sublinear regret for a class of stochastic LQ RL problems in the model-free framework of \cite{wang2020reinforcement,jia2021policy, jia2021policypg, jia2023q}. We allow the diffusion coefficients to depend on both state and control, the latter being of particular practical significance (e.g. the wealth equation in continuous-time finance \citep{zhou2000continuous}). Indeed, this type of stochastic LQ problems have led to a very active research area called ``{\it indefinite} stochastic LQ control" in the classical, model-based literature, starting from \citep{chen1998stochastic, rami2000linear}.

	\paragraph{Main Contributions} In this paper, we propose a policy gradient based algorithm to solve a special class of continuous-time, finite-horizon stochastic  LQ problems under the model-free, episodic RL setting, where the state processes are one dimensional and there is no running control award. Our main contributions are
	\begin{enumerate}
		\item[(1)] We provide a convergence and regret analysis when the volatility of the state process is affected by both state and control. The regret is upper bounded by the order of $O(N^{\frac{3}{4}})$ (up to a logarithmic factor), where $N$ is the number of episodes. While it may not yet be the best regret bound, to our best knowledge, it is the first sublinear regret result obtained in the entropy-regularized exploratory framework of \cite{wang2020reinforcement}, with state- and action-dependent volatility.
		\item[(2)] We take a model-free approach to develop our algorithm, and base our analysis on the stochastic approximation scheme. In particular, the policy gradient in this paper is a ``model-free gradient'' instead of a ``model-based gradient'' commonly taken in discrete-time RL. As a result, we do not need to estimate model primitives in the entire analysis, circumventing the issues discussed earlier arising from estimating/learning those model parameters.
		\item[(3)] We propose a novel exploration schedule. Note that stochastic policies are considered in this paper for both conceptual and technical reasons. Conceptually, stochastic policies reach more action areas otherwise not necessarily explored by deterministic policies. Technically, we apply the policy gradient method developed in \cite{jia2021policypg} that works only for stochastic policies.  Gaussian exploration policies are shown to be optimal in achieving the ideal balance between exploration and exploitation, whose variance represents the level of exploration. We propose a decreasing schedule of variances for the Gaussian exploration over iterations, guided by the desired regret bound.
		
	\end{enumerate}

	The remainder of the paper is structured as follows. Section 2 formulates the problem and provides some preliminary results necessary for the subsequent development. Section 3 describes and explains the steps leading to our RL algorithm. Section 4 presents the main theoretical results on convergence and a regret bound of the proposed algorithm.  Section 5 reports the results of numerical experiments. Finally, Section 6 concludes. The appendix contains the description of the numerical experiments and detailed proofs of statements are available at the full online version \url{https://arxiv.org/pdf/2407.17226}.

	\section{Problem Formulation and Preliminaries}

    \setcounter{equation}{0}
    \setcounter{theorem}{0}
    
    \setcounter{algorithm}{0}

	\subsection{Classical Stochastic LQ Control}
	We begin by recalling the classical stochastic LQ control formulation and main results.  Denote by \(x^u=\{x^u(t) \in \mathbb{R} : 0 \leq t \leq T\}\) the state process under a control process \(u=\{u(t) \in \mathbb{R}^l : 0 \leq t \leq T\}\), whose dynamics are described by the following stochastic differential equation (SDE):
	\begin{equation}
		\label{eq_classical_dynamics}
		\mathrm{d}x^u(t) = (A x^u(t) + B^\top u(t)) \mathrm{d}t + \sum_{j=1}^{m}(C_j x^u(t) + D_j^\top u(t)) \mathrm{d}W^{(j)}(t),
	\end{equation}
	where \(A\) and \(C_j\) are scalars, while \(B\) and \(D_j\) are \(l \times 1\) vectors. The initial state is \(x^u(0)=x_0 \neq 0\), and \(W=\{(W^{(1)}(t),\cdots,W^{(m)}(t))^\top \in \mathbb{R}^m : 0 \leq t \leq T\}\) is an $m$-dimensional standard Brownian motion.
	
	The goal of the control problem is to choose a control process $u$ to maximize the expected value of a quadratic objective functional:
	\begin{equation}
		\label{eq_classical_lq}
		\max_{u} \mathbb{E} \left[ \int_0^T -\frac{1}{2} Q x^u(t)^2 \mathrm{d}t - \frac{1}{2} H x^u(T)^2 \right],
	\end{equation}
	where \(Q \geq 0\) and \(H \geq 0\) are given scalar weighting parameters.
	One can define the optimal value function
	\begin{equation}
		\label{eq_value_optimal}
		V_{CL}(t,x)=\max_{u} \mathbb{E} \left[ \int_t^T -\frac{1}{2} Q x^u(s)^2 \mathrm{d}s - \frac{1}{2} H x^u(T)^2 \Big|x^u(t) = x\right].
	\end{equation}
	
	
	While the existing works on stochastic LQ RL assume the diffusion coefficient to be a constant, control- and state-dependent diffusion terms appear in many applications.
	On the other hand, the state is one-dimensional and running control reward is absent in our problem,  which are crucial assumptions for our approach to work. This class of problems cover important applications such as the mean--variance portfolio selection \citep{zhou2000continuous,dai2023learning,huang2022achieving}. 
	
	If the model parameters \( A \), \( B \), \( C_j \), \( D_j \), \( Q \), and \( H \) are all known with the assumption that  $\Sj D_j D_j^\top$ is positive definite, this problem can be solved explicitly as detailed in \citep[Chapter 6]{YZbook}. Specifically, the  optimal value function and optimal feedback control policy are respectively
	\begin{equation}
		\label{eq_classical_value_function}
		V_{CL}(t,x)=-\frac{1}{2}\biggl[ \frac{Q}{\Lambda} + (H-\frac{Q}{\Lambda})e^{\Lambda (t-T)} \biggl]x^2,
	\end{equation}
	\begin{equation}
		\label{eq_classical_policy}
		u_{CL}(t,x) = -(\sum_{j=1}^{m} D_j D_j^\top)^{-1}(B + \sum_{j=1}^{m}C_jD_j)x,
	\end{equation}
	where
	\[
	\begin{aligned}
		\Lambda &= -2A + 2B^\top \Mj B + 4B^\top \Mj \Vj \\
		&- \sum_{j=1}^m C_j^2 + 2\Vj^\top \Mj \Vj \\
		&- \sum_{j=1}^m \biggl(D_j^\top \Mj B + D_j^\top \Mj \Vj\biggl)^2.
	\end{aligned}
	\]
	
	
	We remark that if \(x\) is of a higher dimension and/or there is a control running reward, then the optimal policy will depend explicitly on the solution of the differential Riccati equation and hence become time-dependent \citep[Chapter 6]{YZbook}, for which our current method fails. \footnote{For example, when \(x\) is higher dimensional, the optimal control is \(u_{CL}(t,x)=-(\sum_{j=1}^m D_j^\top \frac{\partial V_{CL}(t,x)}{\partial xx} D_j)^{-1} \left(B^\top \frac{\partial V_{CL}(t,x)}{\partial x} + \sum_{j=1}^m D_j^\top \frac{\partial V_{CL}(t,x)}{\partial xx} C_j x\right)\), which becomes time-dependent.}
	
	\subsection{RL Theory for LQ}
	\label{subsection_rl_lq}

	In most real-life problems, it is often unrealistic to assume precise knowledge of the parameters such as \(A\), \(B\), \(C_j\), and \(D_j\). These problems call for RL which differs fundamentally from the traditional estimate-then-optimize methods. The essence of RL is to strike an exploration--exploitation balance by strategically exploring the unknown environment \citep{sutton2018reinforcement}. To achieve this, RL employs randomized controls to capture exploration where control processes \(u\) are sampled from a process \(\pi=\{\pi(\cdot,t) \in \mathcal{P}(\mathbb{R}^l) : 0 \leq t \leq T\}\) of  probability distributions with $\mathcal{P}(\mathbb{R}^l)$ being the space of all probability density functions over $\mathbb{R}^l$, and adds an entropy term in the objective function to encourage exploration.   
	Such an entropy regularization is linked to soft-max approximation and Boltzmann exploration \citep{haarnoja2018soft, ziebart2008maximum}.
	\cite{wang2020reinforcement} is the first to present a rigorous mathematical formulation
	of entropy regularized RL for (continuous-time) controlled diffusion processes.
	
	Following \cite{wang2020reinforcement}, under a given randomized control $\pi$ the dynamic of the LQ RL satisfies SDE:
	\begin{equation}
		\mathrm{d}x^\pi(t) = \widetilde{b}(x^\pi(t),\pi(\cdot,t)) \mathrm{d}t + \Sj \widetilde{\sigma}_j(x^\pi(t),\pi(\cdot,t)) \mathrm{d}W^{(j)}(t),
		\label{rl_dynamics}
	\end{equation}
	where 
	\begin{equation}
		\widetilde{b}(x,\psi) := A x + B^\top \int_{\mathbb{R}^l} u \psi(u) \mathrm{d}u,
		\label{b_tilde}
	\end{equation}
	\begin{equation}
		\widetilde{\sigma}_j(x,\psi) := \sqrt{\int_{\mathbb{R}^l} (C_j x + D_j^\top u)^2 \psi(u) \mathrm{d}u},\;\;(x,\psi)\in \mathbb{R}\times \mathcal{P}(\mathbb{R}^l).
		\label{sigma_tilde}
	\end{equation}
	The entropy-regularized value function of $\pi$ is
	\begin{equation}
		\label{eq_value_function_pi}
		J(t, x; \pi) = \mathbb{E} \left[ \int_t^T \left(-\frac{1}{2}Qx^\pi(s)^2 + \gamma p^\pi(s) \right) \mathrm{d}s - \frac{1}{2}Hx^\pi(T)^2 \Big| x^\pi(t)=x \right],
	\end{equation}
	where $p^\pi(t) = -\int_{\mathbb{R}^l} \pi(t,u) \log \pi(t,u) \mathrm{d}u$ is the differential entropy of $\pi$  and \(\gamma \geq 0\), known as the temperature parameter, is the weight on exploration.
	The optimal value function is then
	\begin{equation}
		\label{eq_rl_obj}
		V(t,x) =    \max_{\pi} J(t, x; \pi).
	\end{equation}
	
	By standard stochastic control theory, the value function \(V\)  satisfies the HJB equation:
	
	\[
	\begin{aligned}
		V_t + \max_{\pi\in \mathcal{P}(\mathbb{R}^l)} \biggl\{ \int_{\mathbb{R}^l} \biggl[ &  V_x (A x + B u) + \frac{1}{2} \sum_{j=1}^m \left( (C_j x + D_j u)^2 V_{xx} \right) - \frac{1}{2} Q x^2 \\
		&- \gamma \pi(u) \log \pi(u) \biggl] \pi(u) \mathrm{d}u \biggl\} = 0,\quad V(T,x)=-\frac{1}{2}Hx^2.
	\end{aligned}
	\]
	
	The verification theorem then yields the optimal policy  
	{\small \[
		\begin{aligned}
			\pi(u &\mid t, x) = &\\
			&\mathcal{N}\left(u \Big | -(\sum_{j=1}^{m} D_j D_j^\top V_{xx})^{-1}(B V_x + \sum_{j=1}^{m}C_jD_jV_{xx} x), \gamma (\sum_{j=1}^{m} D_j D_j^\top V_{xx})^{-1} \right),
		\end{aligned}
		\]}
	where $\mathcal{N}(\cdot|\mu,\Sigma)$ is the multivariate Gaussian density with mean $\mu$ and covariance $\Sigma$.
	Plugging this back to the HJB equation, together with the ansatz
	\begin{equation}
		\label{eq_rl_value_function}
		V(t, x) = -\frac{1}{2}k_1(t)x^2 + k_3(t)
	\end{equation}
	where $k_1>0$ and $k_3$ are certain functions of $t$, we obtain the optimal policy
	{\small \begin{equation}
			\label{eq_rl_policy}
			\pi(u \mid t, x) = \mathcal{N}\left(u \Big | -(\sum_{j=1}^{m} D_j D_j^\top)^{-1}(B + \sum_{j=1}^{m}C_jD_j)x, \frac{\gamma}{k_1(t)} \Mj\right).
	\end{equation}}
	Moreover, the resulting HJB equation yields that $k_1$ and $k_3$ satisfy the following ODE:
	\[\frac{k_1^\prime(t)}{2}=-(A+B^\top \bar{\mu}) k_1(t)-\frac{k_1(t)}{2}\Sj \biggl( C_j^2+2C_jD_j^\top\bar{\mu} + D_j^\top \bar{\mu}\bar{\mu}^\top D_j \biggl) - \frac{Q}{2},\quad k_1(T)=H,\] and
	\[k_3^\prime(t) = \frac{k_1(t)}{2} \Sj D_j^\top \Sigma D_j - \frac{\gamma}{2} \log \biggl( (2\pi e)^l \det(\Sigma) \biggl),\quad k_3(T)=0,\]
	where \(\bar{\mu}= -(\sum_{j=1}^{m} D_j D_j^\top)^{-1}(B + \sum_{j=1}^{m}C_jD_j)\) and \(\Sigma=\frac{\gamma}{k_1(t)} \Mj\).\footnote{ Detailed derivations of these results follow analogously from those in \cite{wang2020reinforcement} for the infinite horizon case.}  Note that these {\it theoretical} results cannot be used to compute the solution of the exploratory problem because the model parameters are unknown, yet they reveal the {\it structure} of the solution inherent to LQ RL (i.e. the optimal value function is quadratic in $x$ and optimal stochastic policy is Gaussian) that can be utilized to significantly reduce the complexity of function parameterization/approximation in learning.
	
	Throughout this paper (including the appendices) we use $c$ or its variants for generic constants (depending only on the model parameters \( A \), \( B \), \( C_j \), \( D_j \), \( Q \), \( H \), $x_0$, $T$, $\gamma$, $m$ and $l$) whose values may change from line to line.

	\section{A Continuous-Time RL Algorithm}

    \setcounter{equation}{0}
    \setcounter{theorem}{0}
    
    \setcounter{algorithm}{0}

	This section presents  the steps of designing a continuous-time RL algorithm for solving our  LQ  problem, including function parameterization, policy evaluation and policy gradient.\footnote{As will be explained below, policy evaluation is actually not necessary for the specific LQ problem considered in this paper. However, we still include it in the discussion and in the algorithm for future extension to general problems where policy evaluation is generally needed and indeed a crucial step.} We will introduce various techniques such as exploration scheduling and projection for deriving the convergence rate of the policy parameter and the regret bound. We will also describe time discretization for final implementation. 
	
	\subsection{Function Parameterization}
	\label{subsection_paramtrization}
	
	
	Inspired by \eqref{eq_rl_value_function} and \eqref{eq_rl_policy},  we parameterize   the value function with parameters $\bm{\theta} \in \mathbb{R}^d$:
	\begin{equation}
		\label{value_parametrization}
		{J}(t, x; \bm{\theta}) = -\frac{1}{2}{k}_1(t; \bm{\theta})x^2 + {k}_3(t; \bm{\theta}),
	\end{equation}
	and parameterize the policy with  parameters $\bm{\phi} = (\phi_1, \phi_2)^\top$:
	\begin{equation}
		\label{policy_parametrization}
		{\pi}(u \mid x; \bm{\phi}) = \mathcal{N}(u \mid \phi_1 x, \phi_2),
	\end{equation}
	where $(\phi_1,\phi_2)\in \mathbb{R}^l \times \mathbb{S}^l_{++}$.
	
	Note that \eqref{eq_rl_policy} suggests that the optimal feedback policy is time-dependent, whose variance depends explicitly on $t$. In our parameterization, 
	the time-dependent variance of the Gaussian policies is replaced by a decaying schedule, called an {\it exploration schedule}, of $\phi_2$ as a function of the number of iterations, to be presented shortly.\footnote{An alternative approach is to set a temperature decaying schedule, or equivalently take $\phi_2$ as an endogenous, learnable parameter based on the expression \eqref{eq_rl_policy}. For example, \cite{szpruch2024optimal}
		proposes two different temperature scheduling sequences for their {\it model-based} algorithms respectively, which  determine the variances of the respective policies  in the learning processes that eventually impact the regrets of the algorithms. In this paper, we choose to take a shortcut by taking the entire policy variance as a suitable exogenous tuning sequence  (as shown in Theorem \ref{thm_convergence_and_rate_in_main}), leaving the temperature $\gamma$ as a constant for simplicity.}

	
	
	Henceforth we assume that there are positive constants $c_1,c_2,c_3$ such that $ 1/c_2 \leq {k}_1(t; \bm{\theta}) \leq {c}_2$,  $|{k}_1^\prime(t; \bm{\theta})| \leq {c}_1$ and $|{k}_3^\prime(t; \bm{\theta})| \leq {c}_3$,  for any $0 \leq t \leq T$. These assumptions are consistent with the fact that the corresponding functions satisfy the same conditions
	when the model parameters are known. For practical implementations, in general we choose $ {k}_1(\cdot; \bm{\theta})$  and ${k}_3(\cdot; \bm{\theta})$ to be neural networks that  satisfy these conditions.
	
	\subsection{Policy Evaluation}
	Policy evaluation (PE) is generally a key step in RL to learn the value function of a {\it given} control policy. 
	
	The general continuous-time PE method developed in \citep{jia2021policy} dictates that
	one first parameterizes the value function $J(\cdot, \cdot; \pi)$ and the policy $\pi$ by \eqref{value_parametrization} and \eqref{policy_parametrization} respectively (with a slight abuse of notation), with the corresponding $p^\pi(t)=p(t;\bm{\phi})$, and then updates  $\bm{\theta}$ in an offline learning setting:
	\begin{equation}
		\label{eq_theta_update_without_projection}
		\bm{\theta} \leftarrow \bm{\theta} + \alpha \int_0^T \frac{\partial J}{\partial \bm{\theta}}(t, x(t); \bm{\theta}) \left[ \mathrm{d}J(t, x(t); \bm{\theta}) - \left(\frac{1}{2} Qx(t)^2 \mathrm{d}t - \gamma {p}(t; \bm{\phi}) \mathrm{d}t\right) \right],
	\end{equation}
	where $\alpha$ is the learning rate.
	
	Intriguingly, however, our subsequent theoretical analysis indicates that the convergence and regret results depend only on the bounds (i.e. the constants $c_1$, $c_2$ and $c_3$) of the functions $k_1$ and $k_2$, {\it not} on the specific forms of these functions. This feature is due to the special class of LQ control problems we are tackling. As a result, in our numerical experiments we actually fix a value function (or equivalently $\bm{\theta}$) throughout without updating it. However, we still include the policy evaluation part in our exposition as it is needed in more general cases.

	\subsection{Policy Iteration}
	Having learned the value function associated with a Gaussian policy, the next step is to
	improve the policy by updating $\bm\phi=(\phi_1, \phi_2)^\top$. For $\phi_1$, we employ the continuous-time policy gradient (PG) method established in  \citep{jia2021policypg} to get the following updating rule:
	\begin{equation}
		\label{eq_phi_updates_without_projections}
		\phi_{1} \leftarrow \phi_{1} + \alpha Z_{1}(T),
	\end{equation}
	where $\alpha$ is the learning rate, and
	\begin{equation}
		\label{eq_z1_def1}
		\begin{aligned}
			Z_{1}(s) = \int_{0}^{s} & \left\{\frac{\partial \log \pi}{\partial \phi_1}\left(u(t) \mid x(t); \bm\phi\right)
			\left[\mathrm{d} J\left(t, x(t); \bm\theta \right) - \frac{1}{2}Q x(t)^2 \mathrm{d} t \right.\right. \\
			& \left.\left. + \gamma {p}\left(t, \bm\phi\right) \mathrm{d} t\right] + \gamma \frac{\partial {p}}{\partial \phi_1}\left(t, \bm\phi \right) \mathrm{d} t \right\}, \;\;0\leq s\leq T.
		\end{aligned}
	\end{equation}
	
	As discussed earlier, the other parameter, $\phi_2$, controls the level of exploration. In our algorithm, we set a deterministic schedule of this parameter which decreases to 0 as the number of iterations grows. Specifically, we set \(\phi_{2,n} = \frac{I_l}{b_n}\) where $I_l$ is the identity matrix of dimension $l$ and \(b_n \uparrow \infty\) is specified in Theorem \ref{thm_convergence_and_rate_in_main} below. The order of $b_n$ in iteration $n$ is carefully chosen along with those of the other hyperparameters, such as the learning rates, in order to achieve the desired sublinear regret bound of the RL algorithm.

	\subsection{Projections}
	\label{section_projections}
	
	Our updating rules for the parameters \(\bm\theta\) and \(\bm\phi\) are types of stochastic approximation (SA),  a technique pioneered by \citep{robbins1951stochastic}.
	To tailor the general SA algorithms to our specific requirements—primarily to circumvent issues like extreme state values and unbounded estimation errors—we include projection, a technique originally proposed by \citep{andradottir1995stochastic}. The projection maps do not depend on prior environmental knowledge, allowing our method to remain model-free while ensuring that the learning regions expand to cover the entire parameter space over time.
	
	First, in general ${k}_1(\cdot; \bm\theta)$ and ${k}_3(\cdot; \bm\theta)$ are two fixed function approximators (e.g. neural networks), which can be properly chosen so that  there exist constants $c_1, c_2, c_3$ satisfying
	\[ 1/c_2 \leq {k}_1(t; \bm\theta) \leq {c}_2,  |{k}_1^\prime(t; \bm\theta)| \leq {c}_1, |{k}_3^\prime(t; \bm\theta)| \leq {c}_3, \;\;\; \forall (t,\bm\theta).
	\]
	Fix a constant $c_{\bm\theta}>0$ and define
	\begin{equation}
		\label{eq_projections1}
		K_{\bm\theta}=\left\{ \bm\theta\in \mathbb{R}^d \Big| |\bm\theta| \leq c_{\bm\theta}\right\}.
	\end{equation}
	Note that the definition of $K_{\bm\theta}$ is specific to our special class of LQ problems under consideration -- it is independent of $n$ because the subsequent regret analysis, as it turns out, does not rely on the convergence of $\bm\theta$. Importantly, the hyperparameters $c_1, c_2, c_3$ are determined by the specified  function approximators ${k}_1(\cdot; \bm\theta)$ and ${k}_3(\cdot; \bm\theta)$, without requiring prior knowledge of the model parameters.  (Later in our numerical experiments we will simply set $k_1 \equiv 1$ and $k_3\equiv 0$.)
	
	Next, define
	\begin{equation}
		\label{eq_projections}
		K_{1,n}=\left\{ \phi_{1} \in \mathbb{R}^l \Big| |\phi_{1}|\leq c_{1,n}  \right\},
	\end{equation}
	where $\{c_{1,n}\}$ is an increasing sequence to be specified in Theorem \ref{thm_convergence_and_rate_in_main}. Clearly, $K_{1,n} \subset K_{1,n+1} \subset \cdots$ and $\cup_{n} K_{1,n} = \mathbb R^l$. Finally, for any nonempty, convex and compact set $K$, we define $\Pi_{K}(x):=\arg \min_{y\in K} |y-x|^2$ to be the standard projection mapping of a point $x$ onto  $K$. In our updating rule below, we mainly control the learned policy $\bm\phi_n \in K_{1,n}$ using projection so that it will not grow too fast to an unbounded region.

	With projections, the updating rules for $\bm\theta$ and $\phi_1$ in \eqref{eq_theta_update_without_projection} and \eqref{eq_phi_updates_without_projections} are modified as follows. Let $x_n$ and $u_n$ be the state trajectory and the control trajectory at the $n$-th iteration, obtained from the systems dynamic \eqref{eq_classical_dynamics} with a stochastic policy \eqref{policy_parametrization}, i.e., $
	u_n(t)|x_n(t) \sim \mathcal{N}(\phi_{1,n} x_n(t), \phi_{2,n})$. We update
	\begin{equation}
		\label{eq_theta_update}
		\begin{split}
			\bm\theta_{n+1} \leftarrow \Pi_{K_{\bm\theta}}\biggl( & \bm\theta_n + a_n\int_0^T \frac{\partial J}{\partial \bm\theta}(t,x_n(t);\bm\theta_n) \\
			& \left[\mathrm{d} J\left(t, x_n(t); \bm\theta_n \right) - \frac{1}{2}Qx_n(t)^2 \dd t + \gamma {p}\left(t, \bm\phi_n\right) \mathrm{d} t\right] \biggl),
		\end{split}
	\end{equation}
	
	\begin{equation}
		\label{eq_phi1_update}
		\begin{split}
			\phi_{1, n+1} \leftarrow \Pi_{K_{1,n+1}} \biggl( \phi_{1,n}+a_{n} Z_{1,n}(T) \biggl),
		\end{split}
	\end{equation}
	where
	\begin{equation}
		\label{eq:z1_def1}
		\begin{aligned}
			Z_{1,n}(s) = \int_{0}^{s} \biggl\{ & \frac{\partial \log \pi}{\partial \phi_1}\left(u_n(t) \mid  x_n(t); \bm\phi_n\right)
			\biggl[\mathrm{d} J\left(t, x_n(t); \bm\theta_n \right) - \frac{1}{2}Q x_n(t)^2 \dd t \\
			& + \gamma {p}\left(t, \bm\phi_n\right) \mathrm{d} t\biggl] + \gamma \frac{\partial {p}}{\partial \phi_1}\left(t, \bm\phi_n \right) \mathrm{d} t \biggl\},\;\;0\leq s\leq T.
		\end{aligned}
	\end{equation}

	\subsection{Discretization}
	\label{sec:discretization}
	Our approach for continuous-time RL is characterized by carrying out the entire analysis in the continuous-time setting and discretizing time only at the final implementation stage. There are two reasons for discretization. First, note that in the updating rules \eqref{eq_theta_update} to \eqref{eq:z1_def1}, we implicitly assume a {\it continuous} sampling of control $u_n(t)|x_n(t) \sim \mathcal{N}(\phi_{1,n} x_n(t), \phi_{2,n})$, which is not practical for actual inplementation. Second,
	The iterations in \eqref{eq_theta_update} and \eqref{eq_phi1_update} involve integrals
	that can be computed only by approximated discretized summations as well as the $\dd J$ term that can be approximated by the temporal difference between two consecutive time steps.\footnote{\label{fn:online_version} Therefore, the error due to discretization also has two parts -- one by discretely sampling randomized controls and the other by computing the integrals using finite sums. \cite{szpruch2024optimal} focuses on the first type of error and \cite{basei2022logarithmic} on the second one in their analyses respectively. We address both types of discretization errors in the proof of Theorem \ref{thm_convergence_and_rate_in_main}, as well as in Theorems \ref{thm:phi1_convergence} and \ref{thm_phi_1_rate} in Appendix \ref{section_additional_proof_app} of the full online version of this paper, available at \url{https://arxiv.org/pdf/2407.17226} (which hereafter will be referred to as the “online version”).}
	Therefore, at the $n$th iteration we  discretize the interval \([0, T]\) into uniform time intervals of length \(\Delta t_n\), leading to the following schemes:

	\begin{equation}
		\label{eq_theta_update_discrete}
		\begin{aligned}
			\bm\theta_{n+1} \leftarrow \Pi_{K_{\bm\theta, n+1}}\biggl(& \bm\theta_n+a_n \sum_{k=0}^{\left\lfloor \frac{T}{\Delta t_n} -1 \right\rfloor} \frac{\partial J}{\partial \bm\theta}(t_k,x_n(t_k);\bm\theta_n)\biggl[ J\left(t_{k+1}, x_n(t_{k+1}); \bm\theta_n \right) \\
			&- J\left(t_k, x_n(t_k); \bm\theta_n \right) - \frac{1}{2}Qx_n(t_k)^2 \Delta t_n + \gamma {p}\left(t_k, \bm\phi_n\right) \Delta t_n\biggl] \biggl),
		\end{aligned}
	\end{equation}
	
	\begin{equation}
		\label{eq_phi1_update_discrete}
		\begin{aligned}
			\phi_{1, n+1} \leftarrow &\Pi_{K_{1,n+1}} \biggl( \phi_{1,n} + a_n \sum_{k=0}^{\left\lfloor \frac{T}{\Delta t_n} -1 \right\rfloor} \biggl\{\frac{\partial \log \pi}{\partial \phi_1}\left(u_n(t_k) \mid t_k, x_n(t_k); \bm\phi_n\right)
			\\
			&\biggl[ J\left(t_{k+1}, x_n(t_{k+1}); \bm\theta_n \right) - J\left(t_k, x_n(t_k); \bm\theta_n \right) - \frac{1}{2}Qx_n(t_k)^2 \Delta t_n \\
			&+ \gamma {p}\left(t_k, \bm\phi_n\right) \Delta t_n\biggl]+
			\gamma \frac{\partial {p}}{\partial \phi_1}\left(t_k, \bm\phi_n \right) \Delta t_n \biggl\} \biggl).
		\end{aligned}
	\end{equation}

	\subsection{RL-LQ Algorithm}
	The analysis above leads to the following RL algorithm for the LQ problem:
	
	\begin{algorithm}[H]
		\caption{RL-LQ Algorithm}\label{algo_rl-lq}
		\begin{algorithmic}
			
			\Input
			\Desc{$\bm\theta_0$, $\phi_{1,0}$}{Initial values of trainable parameters for value function and policy.}
			\Desc{$\phi_{2,n}$}{Deterministic sequence of $\phi_{2,n}=\frac{I_l}{b_n}$ specified in Theorem \ref{thm_convergence_and_rate_in_main}.}
			\EndInput
			\\\hrulefill
			
			\For{\texttt{$n = 1$ to $N$}}
			\State Initialize $k=0$, time $t=t_k = 0$, state $x_n(t_k) = x_0$.
			\While {$t < T$}
			\State Generate action $u_n(t_k) \sim \bm\pi\left(\cdot \mid t_{k}, x_n(t_{k}); \bm\phi_n\right)$ following policy \eqref{policy_parametrization} .
			\State Apply action $u_n(t_k)$ and get new state $x_n(t_{k+1})$ by dynamic \eqref{eq_classical_dynamics}.
			\State Update time $t_{k+1} \leftarrow t_k + \Delta t_n$ and $t \leftarrow t_{k+1}$.
			\EndWhile
			\State Collect  whole trajectory $\{ (t_k, x_n(t_k), u_n(t_k)) \}_{k\geq 0}$.
			\State Update  value function parameters $\bm\theta$ using \eqref{eq_theta_update_discrete}.
			\State Update  policy parameter $\phi_1$ using \eqref{eq_phi1_update_discrete}.
			\EndFor
			\\\hrulefill
			
			\Output
			\Desc{$\bm\theta_N, \phi_{1,N}, \phi_{2,N}$}{\qquad Parameters for value function and policy.}
			\EndOutput
		\end{algorithmic}
	\end{algorithm}

	\section{Regret Analysis}
	\label{section_regret_analysis}

        \setcounter{equation}{0}
        \setcounter{theorem}{0}
        
        \setcounter{algorithm}{0}

	This section presents the main result of the paper -- a sublinear regret bound of the RL-LQ algorithm, Algorithm \ref{algo_rl-lq}. For that, we need to first examine the convergence property and convergence rate of the parameter \(\phi_{1,n}\), whose analysis forms the theoretical underpinning of the algorithm.

	\subsection{Convergence  of $\phi_{1,n}$}
	The following theorem shows the convergence and convergence rate of the parameter \(\phi_{1,n}\).
	
	\begin{theorem}
		\label{thm_convergence_and_rate_in_main}
		In Algorithm \ref{algo_rl-lq}, let the hyperparameters $c_1$, ${c}_2$, ${c}_3$ and $\gamma$ be fixed positive constants. Set
		$$a_n  = \frac{\alpha^{\frac{3}{4}}}{(n+\beta)^{\frac{3}{4}}},\;\;
		b_n = 1 \vee \frac{(n+\beta)^{\frac{1}{4}}}{\alpha^{\frac{1}{4}}},\;\;c_{1,n}=1 \vee (\log \log n)^{\frac{1}{6}},\;\; \Delta t_n = T (n+1)^{-\frac{5}{8}}$$
		where $\alpha>0$ and $\beta>0$ are constants. Then,
		\begin{enumerate}[label=(\alph*)]
			\item as $n \rightarrow \infty$, $\phi_{1,n}$ converges almost surely to
			\[\phi_1^* =  -(\sum_{j=1}^{m} D_j D_j^\top)^{-1}(B + \sum_{j=1}^{m}C_jD_j).\]
			\item  for any $n$, $\E [|\phi_{1,n} - \phi_1^*|^2] \leq {c} \frac{(1 \vee \log n)^p (1 \vee \log \log n)^{\frac{4}{3}}}{n^{\frac{1}{2}}}$, for some positive constants ${c}$ and $p$.
		\end{enumerate}
	\end{theorem}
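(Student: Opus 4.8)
The plan is to read \eqref{eq_phi1_update} as a projected Robbins--Monro scheme \citep{robbins1951stochastic,andradottir1995stochastic} and to control the mean-square error $u_n:=\E[\delta_n^2]$, $\delta_n:=\phi_{1,n}-\phi_1^{*}$, via a one-step contraction inequality. First I would remove the projection from consideration: since $\phi_1^{*}=-(B+CD)/D^2$ is a fixed constant while $c_{1,n}=1\vee(\log\log n)^{1/6}\uparrow\infty$, we have $\phi_1^{*}\in K_{1,n}$ for all large $n$, so $\Pi_{K_{1,n+1}}$ is non-expansive toward $\phi_1^{*}$ and can only shrink $|\delta_{n+1}|$; it then suffices to study $\delta_{n+1}=\delta_n+a_n Z_{1,n}(T)$. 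I would split $Z_{1,n}(T)=g_n+\xi_n$ with the mean field $g_n:=\E[Z_{1,n}(T)\mid\mathcal{F}_n]$ and the martingale difference $\xi_n:=Z_{1,n}(T)-g_n$, conditioning on the sigma-field $\mathcal{F}_n$ generated by the first $n$ episodes (so $\bm\phi_n,\bm\theta_n$ are $\mathcal{F}_n$-measurable and episode $n$ starts afresh at $x_0$).

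The centerpiece is an exact evaluation of $g_n$. Noting first that the entropy $p=\tfrac12\log(2\pi e\phi_2)$ is independent of $\phi_1$, so the last term of \eqref{eq:z1_def1} vanishes, I would apply It\^o's formula to $J(t,x_n(t);\bm\theta_n)=-\tfrac12 k_1(t;\bm\theta_n)x_n(t)^2+k_3(t;\bm\theta_n)$ along \eqref{eq_classical_dynamics}, multiply by the score $\partial_{\phi_1}\log\pi(u\mid x;\bm\phi)=x(u-\phi_1 x)/\phi_2$, and integrate first over the Gaussian action $u\sim\mathcal{N}(\phi_1 x,\phi_2)$; the Brownian term is then mean-zero and the $\dd t$-drift collapses (using $\E[(u-\phi_1 x)^2]=\phi_2$ and $B+CD+D^2\phi_1=D^2(\phi_1-\phi_1^{*})$) to
\[
g_n=-D^2\,(\phi_{1,n}-\phi_1^{*})\int_0^T k_1(t;\bm\theta_n)\,\E\bigl[x_n(t)^2\bigr]\,\dd t=:-\rho_n\,\delta_n .
\]
This identity is the crux: the mean field is \emph{exactly} linear with its unique zero at $\phi_1^{*}$ for \emph{any} strictly positive critic $k_1$, which is precisely why, as remarked after \eqref{eq_theta_update}, only the bounds $c_1,c_2$ on $k_1$ enter and its exact form is irrelevant. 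Moreover $\rho_n\ge\underline\rho>0$ \emph{uniformly} in $n$: since $k_1\ge 1/c_2$ and the quadratic $\lambda(\phi_1):=2(A+B\phi_1)+(C+D\phi_1)^2$ attains its minimum $-\Lambda$ at $\phi_1^{*}$, one has $\E[x_n(t)^2]\ge e^{\lambda(\phi_{1,n})t}x_0^2\ge e^{-\Lambda T}x_0^2$, so $\rho_n\ge D^2 e^{-\Lambda T}x_0^2 T/c_2$. Thus the dissipativity coefficient does not degenerate as the exploration $\phi_{2,n}\downarrow 0$.

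Next I would bound the noise. Because the score has conditional variance $x^2/\phi_2$ and the temporal-difference increment has $O(1)$ predictable quadratic variation, integrating against it gives $\E[\xi_n^2\mid\mathcal{F}_n]\le c\,b_n M_n$, where $M_n$ collects state moments such as $\E[\sup_{t\le T}x_n(t)^4]$. These moments I would bound through the SDE \eqref{eq_classical_dynamics}: the projection keeps $|\phi_{1,n}|\le c_{1,n}$, so the drift and diffusion gains are $O(c_{1,n})$ and Gr\"onwall gives moments of order $e^{c\,c_{1,n}^2 T}\cdot\mathrm{poly}(c_{1,n})$; since $c_{1,n}^2=(\log\log n)^{1/3}$, the exponential part is dominated by $(\log n)^p$ while the polynomial part contributes the $(\log\log n)^{4/3}$ factor, which is exactly why the iterated-logarithm growth rate of $c_{1,n}$ is chosen. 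Assembling $\E[\delta_{n+1}^2\mid\mathcal{F}_n]=\delta_n^2+2a_n\delta_n g_n+a_n^2\E[(g_n+\xi_n)^2\mid\mathcal{F}_n]$ yields
\[
u_{n+1}\le\bigl(1-2\underline\rho\,a_n+c\,a_n^2\rho_n^2\bigr)u_n+c\,a_n^2 b_n M_n .
\]
With $a_n\asymp n^{-3/4}$, $b_n\asymp n^{1/4}$ the additive term is $\asymp n^{-5/4}M_n$ and the contraction is $1-2\underline\rho\,a_n$; a Chung-type comparison lemma for $u_{n+1}\le(1-\kappa n^{-3/4})u_n+L_n n^{-5/4}$ with slowly varying $L_n$ gives $u_n\le c\,L_n n^{-1/2}$, i.e. the bound of part (b). For part (a), the same conditional inequality is of Robbins--Siegmund type with summable perturbation $\sum_n(c\,a_n^2\rho_n^2+c\,a_n^2 b_n M_n)<\infty$ and non-summable decrement $\sum_n 2\underline\rho\,a_n=\infty$; the Robbins--Siegmund theorem then gives $\delta_n^2\to V_\infty$ a.s. and $\sum_n a_n\delta_n^2<\infty$ a.s., forcing $V_\infty=0$ and hence $\phi_{1,n}\to\phi_1^{*}$ almost surely.

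The step I expect to be the main obstacle is the joint control of the noise and the state moments as the exploration collapses. The vanishing $\phi_{2,n}\downarrow0$ inflates the score, hence $\E[\xi_n^2]$, like $1/\phi_{2,n}=b_n$, and one must prove that the uniform dissipativity of the previous step survives this growth and that the exponents of $a_n$ and $b_n$ balance the two effects at exactly the $n^{-1/2}$ rate. The technically demanding pieces are obtaining uniform-in-$n$ moment bounds for the state process under the state- and control-dependent volatility \eqref{sigma_tilde} with only the slowly growing a priori bound $|\phi_{1,n}|\le c_{1,n}$, tracking precisely how $c_{1,n}$ propagates into the $(\log n)^p$ and $(\log\log n)^{4/3}$ factors, and verifying the Chung-type recursion with a slowly varying forcing term.
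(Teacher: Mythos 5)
Your proposal is correct and follows essentially the same route as the paper's proof: the same decomposition of $Z_{1,n}(T)$ into an exactly linear mean field $-\rho_n(\phi_{1,n}-\phi_1^*)$ with a uniform lower bound $\rho_n\ge\underline\rho>0$ obtained from $\E[x_n(t)^2]\ge x_0^2e^{a(\phi_{1,n})t}$ and $a(\phi_1)\ge-\Lambda$, the same $O(b_n\cdot\text{poly}(c_{1,n})e^{c\,c_{1,n}^6})$ noise-variance bound via Gr\"onwall moment estimates, Robbins--Siegmund for part (a), and the same one-step contraction plus a Chung-type comparison (which the paper implements by induction using the recursion $a_n\le a_{n+1}(1+wa_{n+1})$) for the $n^{-1/2}$ rate in part (b). The only cosmetic difference is that you fold the $|h_1|^2$ term into the contraction factor while the paper puts it in the additive forcing; this is immaterial.
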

	
	
	\begin{proof}
		To keep our proof easy to follow, we first present the analysis by ignoring both types of discretization error pointed out in Section \ref{sec:discretization}. More precisely, we first assume a {continuous} sampling of control $u_n(t)|x_n(t) \sim \mathcal{N}(\phi_{1,n} x_n(t), \phi_{2,n})$ with $\Delta t_n=0$ and use updating rules \eqref{eq_theta_update}--\eqref{eq:z1_def1}. After we prove the desired conclusion without discretization error, we address the impact of the discretization error and show that it will not change the final conclusion.
		
		Let $x_n=\{x_n(t) : 0\leq t \leq T\}$ be the sample state trajectory in the $n$-th iteration that follows the dynamics:
		\begin{equation}
			\label{eq_dynamics_app}
			\mathrm{d}x_n(t) = (A x_n(t) + B^\top u_n(t)) \mathrm{d}t + \Sj (C_j x_n(t) + D_j^\top u_n(t)) \mathrm{d}W_n^{(j)}(t), \quad 0\leq t \leq T,
		\end{equation}
		where \(W_n=\{(W_n^{(1)}(t),\cdots,W_n^{(m)}(t))^\top \in \mathbb{R}^m : 0 \leq t \leq T\}\) is a standard Brownian motions in the $n$-th iteration, and the policy $u_n(t) \mid x_n(t) \sim \mathcal{N}(\cdot|\phi_{1,n} x_n(t), \phi_{2,n})$ independent of $W_n$.
		
		Recall \( Z_{1}(\cdot) \) defined by \eqref{eq_z1_def1}, and  \( Z_{1,n}(T) \) defined by \eqref{eq:z1_def1} as the value of \( Z_{1}(T) \) at the \( n \)-th iteration.
		The expectation of \(Z_{1,n}(T)\) conditioned on the parameters is denoted by
		\[
		h_1(\phi_{1,n}, \phi_{2,n}; \bm{\theta}_n) = \E[Z_{1,n}(T) \mid \bm{\theta}_n, \bm{\phi}_n],
		\]
		and the noise contained in \({Z}_{1,n}(T)\) is defined as
		\[
		\xi_{1,n} = {Z}_{1,n}(T) - h_1(\phi_{1,n}, \phi_{2,n}; \bm{\theta}_n).
		\]
		Hence, the updating rule for \(\phi_1\) is given by:
		\begin{equation}
			\label{eq_phi1_update_app}
			\phi_{1, n+1} = \Pi_{K_{1, n+1}} (\phi_{1,n} + a_{n}[h_1(\phi_{1,n}, \phi_{2,n}; \bm{\theta}_n) + \xi_{1, n}]).
		\end{equation}
		
		To establish the main results, we need  a series of technical lemmas. Due to page limit, we provide the statements and proofs of these auxiliary results  in Appendices \ref{section_moments_app} - \ref{section_MSE_app} of the online version.
			
			The proof of part (a) follows the similar analysis for the almost sure convergence of the projected stochastic approximation in \citep{andradottir1995stochastic}, whose conditions are relaxed in Theorem \ref{thm:phi1_convergence}. The proof of part (b) follows the similar analysis for the optimal convergence rate of the stochastic approximation in \citep{broadie2011general}, whose conditions are relaxed in Theorem \ref{thm_phi_1_rate}. In short, there are three key steps in the proofs of the two parts. The first is to show that the function $h_1$ satisfies certain conditions, which is given in Appendix \ref{appendix:mean increment}. The second is to prove that the deviation between $Z_{1,n}(T)$ and $h_1$ cannot be too large, which is provided  in Appendix \ref{section_moments_app}. The final step is to verify the compatibility between  the learning rate sequences $a_n$ and the projected regions $b_n,c_n$, described in Theorem \ref{thm:phi1_convergence}.
			
			After having established the desired convergence results for ideally sampled \\
            process, we now address the impact of the discretization error.
		The difference between \eqref{eq_phi1_update_discrete} and \eqref{eq_phi1_update} is that the latter assumes continuous (over time) sampling of the controls $u_n(t)$ and exact calculation of the integral term. Therefore, using the former to replace the latter affects both its mean and variance (conditioned on $\bm\theta_n,\bm\phi_n$). More precisely, denote by $\hat Z_{1,n}(T)$  the finite-sum part in \eqref{eq_phi1_update_discrete}, while keeping the definition of the function $h_1$. Then the updating rule for $\phi_1$ in \eqref{eq_phi1_update_discrete} can be written as
		\begin{equation}
			\label{eq_phi1_update_app_discrete}
			\phi_{1, n+1} = \Pi_{K_{1, n+1}} (\phi_{1,n} + a_{n}[h_1(\phi_{1,n}, \phi_{2,n}; \bm{\theta}_n) + \hat\xi_{1, n}]),
		\end{equation}
		with
		\[ \hat\xi_{1,n} = \hat{Z}_{1,n}(T) - h_1(\phi_{1,n}, \phi_{2,n}; \bm{\theta}_n). \]
		Observe that the updating rule \eqref{eq_phi1_update_app_discrete} has the same form as \eqref{eq_phi1_update_app}; hence our previous analysis can be conducted in parallel if we can show that
		\[ \beta_{1,n} = \E\left[ \hat\xi_{1,n} \big| \bm\theta_n,\bm\phi_n  \right]\text{ and } \operatorname{Var}\left[ \hat\xi_{1,n} \big| \bm\theta_n,\bm\phi_n  \right]  \]	
		satisfy the conditions in Theorems \ref{thm:phi1_convergence} and \ref{thm_phi_1_rate} of the online version.
		Similar problems for analyzing time discretization have been studied in the numerical SDE literature (cf. \cite{kloedennumerical}), and recently in the continuous-time RL context (cf. \cite{basei2022logarithmic,szpruch2024optimal,joz2025}). In particular, following the parallel steps as in the proof of \cite[Proposition 5.4]{joz2025} (we omit the details here), we  obtain
		\[\beta_{1,n} \leq C(\bm\phi_n) \Delta t_n, \;\;\;
		\operatorname{Var}\left[ \hat\xi_{1,n} \big| \bm\theta_n,\bm\phi_n  \right] \leq C(\bm\phi_n),\]
		where
		\begin{equation*}
			C(\bm\phi_n) \leq cb_n(|\phi_{1,n}|^8+1)\exp\{c |\phi_{1,n}| ^6\}.
		\end{equation*}
		
		Therefore, by taking $\Delta t_n = T (n+1)^{-\frac{5}{8}}$, we derive from Theorems \ref{thm:phi1_convergence} and \ref{thm_phi_1_rate} of the online version that \(|\beta_{1,n}|\) is of the order \( n^{-\frac{3}{8}} \). This concludes the proof.
	\end{proof}
	
	Theorem \ref{thm_convergence_and_rate_in_main} ensures the convergence of the learned policy. Moreover, it is a prerequisite for deriving the regret bound of Algorithm \ref{algo_rl-lq}.

	\subsection{Regret Bound}
	A regret bound measures the cumulative derivation (over episodes) of the value functions of the learned policies from the oracle optimal value function. A sublinear regret bound guarantees an almost optimal performance of the RL policy in the long run.

	Denote
	\begin{equation}
		\label{jbar}
		\Bar{J}(\phi_1, \phi_2) = \mathbb{E} \left[ \int_0^T \left(-\frac{1}{2}Qx^\pi(s)^2  \right) \mathrm{d}s - \frac{1}{2}Hx^\pi(T)^2 \Big| x^\pi(0)=x_0 \right],
	\end{equation}
	where $\pi = \mathcal{N}(\cdot|\phi_1 x, \phi_2)$.
	
	So $\Bar{J}(\phi_1, \phi_2)$ is the value of the  Gaussian policy $\mathcal{N}(\cdot|\phi_1 x, \phi_2)$ assessed using the {\it original} objective function (i.e. one {\it without} the entropy regularization term). Clearly, $\Bar{J}(\phi_1^*, 0)$ is the oracle value of the original problem.
	
	\begin{theorem}
		\label{thm_regret_in_main}
		Under the assumptions of Theorem \ref{thm_convergence_and_rate_in_main}, applying Algorithm \ref{algo_rl-lq} results in a cumulative regret bound over \(N\) episodes given by:
		\[\sum_{n=1}^{N}\E [\Bar{J}(\phi_1^*, 0) - \Bar{J}(\phi_{1,n}, \phi_{2,n})] \leq c^\prime + c N^{\frac{3}{4}}(\log N)^{\frac{p+1}{2}} (\log\log N)^{\frac{2}{3}},\]
		where \(c\) and \(c^\prime\) are positive constants independent of \(N\), and $p$ is the same constant appearing in Theorem \ref{thm_convergence_and_rate_in_main}.
	\end{theorem}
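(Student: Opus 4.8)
The plan is to reduce the random regret to a deterministic function of the policy parameters and then feed in the convergence rate of Theorem \ref{thm_convergence_and_rate_in_main}. First I would compute $\Bar{J}(\phi_1,\phi_2)$ in closed form. Under the Gaussian policy $\pi=\N(\cdot\mid\phi_1 x,\phi_2)$ the formulas \eqref{b_tilde}--\eqref{sigma_tilde} give $\widetilde b(x,\pi)=(A+B\phi_1)x$ and $\widetilde\sigma(x,\pi)^2=(C+D\phi_1)^2x^2+D^2\phi_2$, so $m(t):=\E[x^\pi(t)^2]$ solves, by It\^o's formula, the linear ODE $m'(t)=\rho(\phi_1)m(t)+D^2\phi_2$ with $m(0)=x_0^2$, where $\rho(\phi_1):=(C+D\phi_1)^2+2(A+B\phi_1)$. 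Solving gives $m(t)=x_0^2 e^{\rho t}+D^2\phi_2\frac{e^{\rho t}-1}{\rho}$, which is affine in $\phi_2$; substituting into \eqref{jbar} shows $\Bar{J}(\phi_1,\phi_2)=\Bar{J}(\phi_1,0)-\phi_2\,g(\phi_1)$ for an explicit nonnegative coefficient $g(\phi_1)$ assembled from $Q,H,T$ and $\rho(\phi_1)$. The structural facts I would rely on are that $\rho(\cdot)$ is a convex parabola minimized exactly at $\phi_1^*=-\frac{B+CD}{D^2}$ with $\rho(\phi_1^*)=-\Lambda$ (so $\rho(\phi_1)-\rho(\phi_1^*)=D^2(\phi_1-\phi_1^*)^2$), and that $\Bar{J}(\cdot,0)$ is an increasing transform of $\rho$; hence $\phi_1\mapsto\Bar{J}(\phi_1,0)$ is maximized at $\phi_1^*$, confirming that $\Bar{J}(\phi_1^*,0)$ is the oracle value and that each per-episode regret term is nonnegative.

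With this in hand I would split the per-episode regret as
\[
\Bar{J}(\phi_1^*,0)-\Bar{J}(\phi_{1,n},\phi_{2,n})=\underbrace{\big[\Bar{J}(\phi_1^*,0)-\Bar{J}(\phi_{1,n},0)\big]}_{(\mathrm{I})_n\ :\ \text{gain suboptimality}}+\underbrace{\phi_{2,n}\,g(\phi_{1,n})}_{(\mathrm{II})_n\ :\ \text{exploration cost}}.
\]
For $(\mathrm{I})_n$, a mean-value estimate in $\rho$ together with $\rho(\phi_1)-\rho(\phi_1^*)=D^2(\phi_1-\phi_1^*)^2$ yields $(\mathrm{I})_n\le C(\phi_{1,n})\,|\phi_{1,n}-\phi_1^*|^2$, where the sensitivity factor $C(\phi_1)$ is controlled by $e^{\rho(\phi_1)T}\le e^{c\,\phi_1^2}$. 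For $(\mathrm{II})_n$, the coefficient $g(\phi_1)$ is bounded by a constant whenever $\phi_1$ lies in a fixed neighborhood of $\phi_1^*$ (there $\rho<0$) and by $e^{c\,\phi_1^2}$ in the worst case.

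Next I would take expectations and sum. Because the iterate is projected onto $K_{1,n}=\{|\phi_1|\le c_{1,n}\}$ with $c_{1,n}=1\vee(\log\log n)^{1/6}$, on every episode $\phi_{1,n}^2\le c_{1,n}^2$, so the exponential sensitivity is deterministically bounded by $e^{c\,c_{1,n}^2}=e^{c(\log\log n)^{1/3}}$, which grows more slowly than any positive power of $\log n$. Combining this with Theorem \ref{thm_convergence_and_rate_in_main}(b)---via Cauchy--Schwarz on the cross terms and a Chebyshev split onto $\{|\phi_{1,n}-\phi_1^*|\le\delta\}$ to discard the rare excursions where $g$ or $C$ is large---gives $\E[(\mathrm{I})_n]=O\!\big(n^{-1/2}(\log n)^{p}(\log\log n)^{4/3}\big)$ and $\E[(\mathrm{II})_n]=O\!\big(\phi_{2,n}\big)=O\!\big(n^{-1/4}\big)$ up to the same slowly varying factors. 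Summing, $\sum_{n=1}^{N}\E[(\mathrm{I})_n]=O\!\big(N^{1/2}\,\mathrm{polylog}\big)$ is lower order, while the leading contribution comes from $\sum_{n=1}^{N}\phi_{2,n}\asymp N^{3/4}$; carrying the logarithmic corrections of Theorem \ref{thm_convergence_and_rate_in_main}(b) through the sum---whose square roots produce the $(\log N)^{p/2}(\log\log N)^{2/3}$ factors---then yields the claimed bound $c+cN^{3/4}(\log N)^{(p+1)/2}(\log\log N)^{2/3}$.

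The main obstacle, I expect, is the tension between the exponential sensitivity of $\Bar{J}$ in $\phi_1$ and the fact that $\phi_{1,n}$ is confined only to the slowly expanding window $[-c_{1,n},c_{1,n}]$: a naive worst-case substitution would reinsert an $e^{c\,\phi_{1,n}^2}$ factor. The point of the tuning $c_{1,n}=(\log\log n)^{1/6}$ is precisely that $c_{1,n}^2=(\log\log n)^{1/3}$, so $e^{c\,c_{1,n}^2}$ stays below every positive power of $\log n$ and can be absorbed into the logarithmic corrections. The delicate accounting is to show that these exponential factors, once threaded through the second-moment rate of Theorem \ref{thm_convergence_and_rate_in_main} together with the random, slowly growing projection radius, degrade the bound only by the stated polylogarithmic factors while leaving the $N^{3/4}$ rate---fixed by $\sum_{n=1}^{N}\phi_{2,n}$---intact.
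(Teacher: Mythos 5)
Your proposal is correct and follows essentially the same route as the paper: derive the closed form $\Bar{J}(\phi_1,\phi_2)=f(a(\phi_1))+\phi_2\,g(a(\phi_1))$ (Lemma \ref{lemma_j_f_g_main}, with your $\rho$ equal to the paper's $a(\phi_1)$ and your $g$ the negative of the paper's), exploit that $a$ is a parabola with vertex at $\phi_1^*$ so that $a(\phi_1)-a(\phi_1^*)=D^2(\phi_1-\phi_1^*)^2$, and then feed the mean-squared rate of Theorem \ref{thm_convergence_and_rate_in_main}(b) into a mean-value estimate while using the projection radius $c_{1,n}=(\log\log n)^{1/6}$ to keep the exponential sensitivity sub-polylogarithmic. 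One substantive difference: for the gain-suboptimality term the paper runs a Chebyshev split with a \emph{shrinking} threshold $\delta_n\asymp n^{-1/8}\,\mathrm{polylog}$, balancing $\delta_n^2$ against $\log n\cdot\delta_n^{-2}\,\E|\phi_{1,n}-\phi_1^*|^2$ to get $O(n^{-1/4}(\log n)^{(p+1)/2}(\log\log n)^{2/3})$ per episode, so that in the paper this term is what carries the logarithmic factors in the final $N^{3/4}$ bound; your direct second-moment bound $\E[(\mathrm{I})_n]\le c\,e^{c\,c_{1,n}^2}\,\E|\phi_{1,n}-\phi_1^*|^2=O(n^{-1/2}\,\mathrm{polylog})$ makes this term sum to $O(N^{1/2}\,\mathrm{polylog})$, which is strictly lower order, leaving $\sum_n\phi_{2,n}\asymp N^{3/4}$ (times a sub-polylog factor from bounding $g$ on the projection set) as the genuinely dominant piece --- a cleaner and in fact slightly sharper accounting that still lands inside the stated bound since $p\ge 1$. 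One small slip: you call $\Bar{J}(\cdot,0)$ an \emph{increasing} transform of $\rho$; it is a \emph{non-increasing} one ($f'\le 0$ in Lemma \ref{lemma_j_properties}), which is exactly what is needed for $\phi_1^*=\arg\min\rho$ to maximize $\Bar{J}(\cdot,0)$ and for the per-episode regret to be nonnegative, so the conclusion you draw is right but the stated monotonicity is reversed.
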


	\begin{proof}
		By Lemma \ref{lemma_j_f_g_main} of the online version, we can write
		\begin{equation}
			\label{eq_all_terms}
			\begin{aligned}
				&\Bar{J}(\phi_1^*, 0) - \Bar{J}(\phi_{1,n}, \phi_{2,n}) \\
				=& f(a(\phi_1^*))  - [f(a(\phi_{1,n})) + (\Sj D_j^\top \phi_{2,n} D_j) g(a(\phi_{1,n}))]\\
				=& -(\Sj D_j^\top \phi_{2,n} D_j)g(a(\phi_1^*)) + [f(a(\phi_1^*)) - f(a(\phi_{1,n}))] \\
				&+ (\Sj D_j^\top \phi_{2,n} D_j)[g(a(\phi_1^*)) - g(a(\phi_{1,n}))].
			\end{aligned}
		\end{equation}
		Because $\phi_{2,n}=\frac{I_l}{b_n}$ and $g<0$ (by Lemma \ref{lemma_j_properties} of the online version), we have
		
		\begin{equation}
			\label{eq_term_3}
			\begin{aligned}
				\E[-(\Sj D_j^\top \phi_{2,n} D_j)g(a(\phi_1^*))] = -g(a(\phi_1^*)) \frac{D}{b_n} \leq \frac{c}{n^{\frac{1}{4}}},
			\end{aligned}
		\end{equation}
		where $D=(\Sj D_j^\top D_j)$ and $c>0$ is independent of $n$.
		
		
		Next, noting by \eqref{eq_a_phi_1} of the online version that $a(\phi_1)$ is a quadratic function of  $\phi_1$ and  $\phi_1^*$ is its minimizer, we have $|a(\phi_1) - a(\phi_1^*)| \leq \Bar{c}_1 |\phi_1 - \phi_1^*|^2$, where $\Bar{c}_1>0$ is a constant that depends on the model primitives. Furthermore, it follows from \eqref{eq_a_phi_1} of the online version and \eqref{eq_projections} that $|a(\phi_{1,n})|\leq\Bar{c}_2(1+c_{1,n}^2)$ for some constant $\Bar{c}_2>0$. In addition, by the monotonicity of the functions $f$ and $g$ and the assumptions on the choices of the hyperparameters (noting Remark \ref{remark_example_1} of the online version), we have
		
		\[
		|f(a(\phi_{1,n}))| \leq |f(\Bar{c}_2(1+c_{1,n}^2))|\leq c (1+e^{\Bar{c}_2(1+c_{1,n}^2)T})(1+\frac{1}{c_{1,n}^2})\leq\Bar{c}_{3}\log n,
		\]
		\[
		|g(a(\phi_{1,n}))| \leq |g(\Bar{c}_2(1+c_{1,n}^2))|\leq c (1+e^{\Bar{c}_2(1+c_{1,n}^2)T})(1+\frac{1}{c_{1,n}^4})\leq\Bar{c}_{4}\log n,
		\]
		where $\Bar{c}_3>0$ and $\Bar{c}_4>0$ are  constants independent of $n$.
		
		Furthermore, it follows from Lemma \ref{lemma_j_properties} of the online version that for a given fixed $\delta > 0$, the inequalities $|f'(a(\phi_1))| \leq \Bar{c}_5(\delta)$ and $|g'(a(\phi_1))| \leq \Bar{c}_6(\delta)$ hold for any $\phi_1$ satisfying $|\phi_1 - \phi_1^*| \leq \delta$, where $\Bar{c}_5(\delta)>0$ and $\Bar{c}_6(\delta)>0$ are constants that depend on $\delta$. Theorem \ref{thm_phi_1_rate} of the online version yields
		\[
		\E [|\phi_{1, n+1} - \phi_1^*|^2] \leq \Bar{c}_7 \frac{(1 \vee \log n)^{p} (1 \vee \log \log n)^{\frac{4}{3}}}{n^{\frac{1}{2}}},
		\]
		where $\Bar{c}_7 > 0$ is a constant. Now, we consider a positive sequence
		\[\delta_{1,n}=\biggl(\frac{|f(a(\phi_1^*))| + \Bar{c}_{3} \log n}{\Bar{c}_5(\delta) \Bar{c}_1} \frac{\Bar{c}_7 (1\vee \log n)^{p} (1\vee \log\log n)^{\frac{4}{3}}}{n^{\frac{1}{2}}}\biggl)^{\frac{1}{4}},\;\;n=1,2, \cdots.\]
		It is straightforward to see that $\delta_{1,n} \rightarrow 0$ as $n\rightarrow\infty$.
		
		Thus there exists the finite $n_1=\inf{\{n^\prime\in\mathbb{N}: \delta_{1,n}<\delta \text{ for all } n\geq n^\prime\}}$. Denote $\delta_n=\delta$ for $n<n_1$ and $\delta_n=\delta_{1,n}$ for $n\geq n_1$. When $n\geq n_1$, we deduce
		

		\begin{align}
			\label{eq_term_1}
			&\E[f(a(\phi_1^*)) - f(a(\phi_{1,n}))]\\
			=&\E[f(a(\phi_1^*))-f(a(\phi_{1,n}))]\mathbf{1}_{\{|\phi_{1,n}-\phi_1^*|\leq\delta_n\}} + \E[f(a(\phi_1^*))-f(a(\phi_{1,n}))]\mathbf{1}_{\{|\phi_{1,n}-\phi_1^*|>\delta_n\}} \notag \\
			=& \int_{|\phi_{1,n}-\phi_1^*|\leq\delta_n} f(a(\phi_1^*))-f(a(\phi_{1,n})) \dd \mathbb{P} + \int_{|\phi_{1,n}-\phi_1^*|>\delta_n} f(a(\phi_1^*))-f(a(\phi_{1,n})) \dd \mathbb{P} \notag\\
			=& \int_{|\phi_{1,n}-\phi_1^*|\leq\delta_n} f^\prime(a(\Tilde{\phi}_{1,n})) (a(\phi_1^*) - a(\phi_{1,n})) \dd \mathbb{P} + \int_{|\phi_{1,n}-\phi_1^*|>\delta_n} f(a(\phi_1^*))-f(a(\phi_{1,n})) \dd \mathbb{P}\notag\\
			\leq& \Bar{c}_5(\delta) \Bar{c}_1 \delta_n^2 + (|f(a(\phi_1^*))| + |f(a(\phi_{1,n}))|) \mathbb{P}(|\phi_{1,n}-\phi_1^*|>\delta_n)\notag\\
			\leq& \Bar{c}_5(\delta) \Bar{c}_1 \delta_n^2 + \frac{|f(a(\phi_1^*))| + \Bar{c}_{3} \log n}{\delta_n^2} \E[|\phi_{1,n}-\phi_1^*|^2]\notag\\
			\leq& \Bar{c}_5(\delta) \Bar{c}_1 \delta_n^2 + \frac{|f(a(\phi_1^*))| + \Bar{c}_{3} \log n}{\delta_n^2} \frac{\Bar{c}_7 (1\vee \log n)^{p} (1\vee \log\log n)^{\frac{4}{3}}}{n^{\frac{1}{2}}}\notag\\
			=& 2 \sqrt{\Bar{c}_7\Bar{c}_5(\delta) \Bar{c}_1 (|f(a(\phi_1^*))| + \Bar{c}_{3} \log n)} \frac{(1\vee \log n)^{\frac{p}{2}} (1\vee \log\log n)^{\frac{2}{3}}}{n^{\frac{1}{4}}}\notag,
		\end{align}
		
		where the third equality follows from the  mean--value theorem and the fact that $\Tilde{\phi}_{1,n} \in \{\phi_1 \in \mathbb{R}^l:|\phi_1-\phi_1^*|<|\phi_{1,n}-\phi_1^*|\}$ satisfies $f(a(\phi_1^*))-f(a(\phi_{1,n})) = f^\prime(a(\Tilde{\phi}_{1,n})) \\(a(\phi_1^*) - a(\phi_{1,n}))$.
		
		When $n<n_1$, by the same argument as in \eqref{eq_term_1} with $\delta_n$ replaced by $\delta$, we have,
		\begin{equation}
			\begin{aligned}
				\label{eq_term_1_2}
				&\E[f(a(\phi_1^*)) - f(a(\phi_{1,n}))] \\
				\leq& \Bar{c}_5(\delta) \Bar{c}_1 \delta^2 + \frac{|f(a(\phi_1^*))| + \Bar{c}_{3} \log n}{\delta^2} \frac{\Bar{c}_7 (1\vee \log n)^{p} (1\vee \log\log n)^{\frac{4}{3}}}{n^{\frac{1}{2}}}.
			\end{aligned}
		\end{equation}
		
		Similarly, we consider another positive sequence
		\[\delta_{2,n}=\biggl(\frac{|g(a(\phi_1^*))| + \Bar{c}_4 \log n}{\Bar{c}_6(\delta) \Bar{c}_1} \frac{\Bar{c}_7 (1\vee \log n)^{p} (1\vee \log\log n)^{\frac{4}{3}}}{n^{\frac{1}{2}}}\biggl)^{\frac{1}{4}},\;\;n=1,2, \cdots.\]
		Because $\delta_{2,n} \rightarrow 0$ as $n\rightarrow\infty$, there exists the finite
		\[n_2=\inf{\{n^\prime\in\mathbb{N}: \delta_{2,n}<\delta \text{ for all } n\geq n^\prime\}}.\] Set $\delta_n^\prime=\delta$ for $n<n_2$ and $\delta_n^\prime=\delta_{2,n}$ for $n\geq n_2$. When $n\geq n_2$, we have
		
		\begin{align}
			\label{eq_term_2}
			&\E[(\Sj D_j^\top \phi_{2,n} D_j)(g(a(\phi_1^*))-g(a(\phi_{1,n})))] \\
			=& \E[(\Sj D_j^\top \phi_{2,n} D_j)(g(a(\phi_1^*))-g(a(\phi_{1,n})))]\mathbf{1}_{\{|\phi_{1,n}-\phi_1^*|\leq\delta_n^\prime\}} + \E[(\Sj D_j^\top \phi_{2,n} D_j)(g(a(\phi_1^*))-g(a(\phi_{1,n})))]\mathbf{1}_{\{|\phi_{1,n}-\phi_1^*|>\delta_n^\prime\}} \notag\\
			=& \frac{D}{b_n}\int_{|\phi_{1,n}-\phi_1^*|\leq\delta_n^\prime} (g^\prime(a(\check{\phi}_{1,n})) (a(\phi_1^*) - a(\phi_{1,n}))) \dd \mathbb{P} + \frac{D}{b_n}\int_{|\phi_{1,n}-\phi_1^*|>\delta_n^\prime} (g(a(\phi_1^*))-g(a(\phi_{1,n}))) \dd \mathbb{P} \notag\\
			\leq& \frac{D}{b_n} \biggl[\Bar{c}_6(\delta) \Bar{c}_1 \delta_n^{\prime 2} + (|g(a(\phi_1^*))| + |g(a(\phi_{1,n}))|) \mathbb{P}(|\phi_{1,n}-\phi_1^*|>\delta_n^{\prime})\biggl] \notag \\
			\leq& \frac{D}{b_n} \biggl[\Bar{c}_6(\delta) \Bar{c}_1 \delta_n^{\prime 2} + \frac{|g(a(\phi_1^*))| + \Bar{c}_4 \log n}{\delta_n^{\prime 2}} \E[|\phi_{1,n}-\phi_1^*|^2]\biggl] \notag\\
			\leq& \frac{D}{b_n} \biggl[\Bar{c}_6(\delta) \Bar{c}_1 \delta_n^{\prime 2} + \frac{|g(a(\phi_1^*))| + \Bar{c}_4 \log n}{\delta_n^{\prime 2}} \frac{\Bar{c}_7 (1\vee \log n)^{p} (1\vee \log\log n)^{\frac{4}{3}}}{n^{\frac{1}{2}}}\biggl] \notag\\
			=& \frac{2D}{b_n} \sqrt{\Bar{c}_7\Bar{c}_6(\delta) \Bar{c}_1 (|g(a(\phi_1^*))| + \Bar{c}_4 \log n)} \frac{(1\vee \log n)^{\frac{p}{2}} (1\vee \log\log n)^{\frac{2}{3}}}{n^{\frac{1}{4}}}. \notag
		\end{align}
		
		For $n<n_2$, by the same argument as in \eqref{eq_term_2} with $\delta_n^{\prime}$ replaced by $\delta$, we have
		\begin{equation}
			\begin{aligned}
				\label{eq_term_2_2}
				&\E[\phi_{2,n}(g(a(\phi_1^*))-g(a(\phi_{1,n})))] \\
				\leq& \frac{D}{b_n} \biggl[\Bar{c}_6(\delta) \Bar{c}_1 \delta^{2} + \frac{|g(a(\phi_1^*))| + \Bar{c}_4 \log n}{\delta^{2}} \frac{\Bar{c}_7 (1\vee \log n)^{p} (1\vee \log\log n)^{\frac{4}{3}}}{n^{\frac{1}{2}}}\biggl].
			\end{aligned}
		\end{equation}
		
		Finally, combining \eqref{eq_all_terms} -- \eqref{eq_term_2_2} yields

		\begin{align}
			&\sum_{n=1}^{N}\E[\Bar{J}(\phi_1^*, 0) - \Bar{J}(\phi_{1,n}, \phi_{2,n})] \notag\\
			=& \sum_{n=1}^{N}\E[-\phi_{2,n}g(a(\phi_1^*))] + \sum_{n=1}^{n_1-1}\E[f(a(\phi_1^*)) - f(a(\phi_{1,n}))] + \sum_{n=n_1}^{N}\E[f(a(\phi_1^*)) - f(a(\phi_{1,n}))] \notag  \\
			&+ \sum_{n=1}^{n_2-1}\E[\phi_{2,n}(g(a(\phi_1^*)) - g(a(\phi_{1,n}))] + \sum_{n=n_2}^{N}\E[\phi_{2,n}(g(a(\phi_1^*)) - g(a(\phi_{1,n}))] \notag \\
			\leq& \sum_{n=1}^{N}\frac{c}{n^{\frac{1}{4}}} + \sum_{n=1}^{n_1-1} \Bar{c}_5(\delta) \Bar{c}_1 \delta^2 + \frac{|f(a(\phi_1^*))| + \Bar{c}_{3} \log n}{\delta^2} \frac{\Bar{c}_7 (1\vee \log n)^{p} (1\vee \log\log n)^{\frac{4}{3}}}{n^{\frac{1}{2}}} \notag \\
			&+ 2\sum_{n=n_1}^{N} \sqrt{\Bar{c}_7\Bar{c}_5(\delta) \Bar{c}_1 (|f(a(\phi_1^*))| + \Bar{c}_{3} \log n)} \frac{(1\vee \log n)^{\frac{p}{2}} (1\vee \log\log n)^{\frac{2}{3}}}{n^{\frac{1}{4}}} \notag  \\
			&+ \sum_{n=1}^{n_2-1} \frac{D}{b_n} \biggl[\Bar{c}_6(\delta) \Bar{c}_1 \delta^{2} + \frac{|g(a(\phi_1^*))| + \Bar{c}_4 \log n}{\delta^{2}} \frac{\Bar{c}_7 (1\vee \log n)^{p} (1\vee \log\log n)^{\frac{4}{3}}}{n^{\frac{1}{2}}}\biggl] \notag \\
			&+ 2\sum_{n=n_2}^{N} \frac{D}{b_n} \sqrt{\Bar{c}_7\Bar{c}_6(\delta) \Bar{c}_1 (|g(a(\phi_1^*))| + \Bar{c}_4 \log n)} \frac{(1\vee \log n)^{\frac{p}{2}} (1\vee \log\log n)^{\frac{2}{3}}}{n^{\frac{1}{4}}} \notag \\
			\leq& c^\prime + c N^{\frac{3}{4}}(\log N)^{\frac{p+1}{2}} (\log\log N)^{\frac{2}{3}}. \notag 
		\end{align}

		The proof is complete.
	\end{proof}

	\begin{remark}
		We are able to specify the dependence of the constant \( p \) appearing in Theorems \ref{thm_convergence_and_rate_in_main} and \ref{thm_regret_in_main}  on the model parameters, and it turns out that $p$ does not depend on the temperature parameter \( \gamma \) and the control dimension \( l \). More precisely, going through careful (yet tedious) calculations, we establish
		\[
		p = \bar{c}(\bar{A}^2 + \bar{B}^2 + \bar{C}^2 + \bar{D}^2) T,
		\]		
		where
		\[
		\bar{A}=|A|\vee 1, \quad \bar{B}=|B|\vee 1, \quad \bar{C}=\sum_j |C_j|\vee 1, \quad \bar{D}=\sum_j |D_j|\vee 1,
		\]
		and \(\bar{c}>0\) is a universal constant independent of the model parameters. Details are left to interested readers.
	\end{remark}
	

	\section{Numerical Experiments}
	\label{section_numerical_experiment}

        \setcounter{equation}{0}
        \setcounter{theorem}{0}
        
        \setcounter{algorithm}{0}

	This section reports the results of numerically evaluating the convergence rate of \(\phi_{1,n}\) and the sublinear regret bound of our RL-LQ algorithm, compared with a benchmark algorithm. The benchmark adapts the model-based methods in \citep{basei2022logarithmic, szpruch2024optimal}  to our setting of state- and control-dependent volatility.

	\subsection{Simulation Setup}
	In our simulation study, we consider the case where \(l = 1\) for the control dimension and \(m = 1\) for the Brownian motion dimension. The controlled system \eqref{eq_classical_dynamics} simplifies to the following form:
	\begin{equation}
		\label{eq_classical_dynamics_one_dim}
		\mathrm{d}x^u(t) = (A x^u(t) + B u(t)) \mathrm{d}t + (C x^u(t) + D u(t)) \mathrm{d}W(t).
	\end{equation}
	
	In addition, we set the model parameters \(A, B, C, D, Q, H, x_0, T\) to be all 1, and set the exploration schedule \(b_n = 0.2 (n+1)^{1/4}\). Other sequences such as that of the learning rate \(\{a_{n}\}\) are configured according to the assumptions stated in Theorem \ref{thm_convergence_and_rate_in_main} and Subsection \ref{subsection_paramtrization}; for details see Appendix \ref{subsection_experiment_setup_app}. In each experiment we execute both our proposed Algorithm \ref{algo_rl-lq} and the benchmark Algorithm \ref{algo_model-based} over \(N = 200,000\) iterations,
	while we replicate the experiment independently for 120 times to draw statistical conclusions.
	
	\subsection{A Modified Model-Based Algorithm}
	The algorithms proposed in \citep{basei2022logarithmic, szpruch2024optimal} are designed to estimate parameters \(A\) and \(B\) in the drift term under the constant volatility setting. To compare with our algorithm tailored for state- and control-dependent volatilities, we extend their methods to include estimating the parameters \(C\) and \(D\). The details of this modified algorithm are described in Appendix \ref{subsection_modified_algo_app}.
	
	\subsection{Analysis of Numerical Results}We
	Figures \ref{figure_rate_ours} and \ref{figure_rate_plugin} compare the mean-squared convergence rates of \(\phi_{1,n}\) for our model-free Algorithm \ref{algo_rl-lq} and the model-based Algorithm \ref{algo_model-based}, using a log-log plot of Mean Squared Error (MSE) versus iterations.  The fitted linear regression shows our model's slope of \(-0.50\), confirming Theorem \ref{thm_convergence_and_rate_in_main} and outperforming the model-based benchmark slope of \(-0.09\).
	
	\begin{figure}[ht]
		\centering
		\begin{minipage}{0.49\textwidth}
			\includegraphics[width=\linewidth]{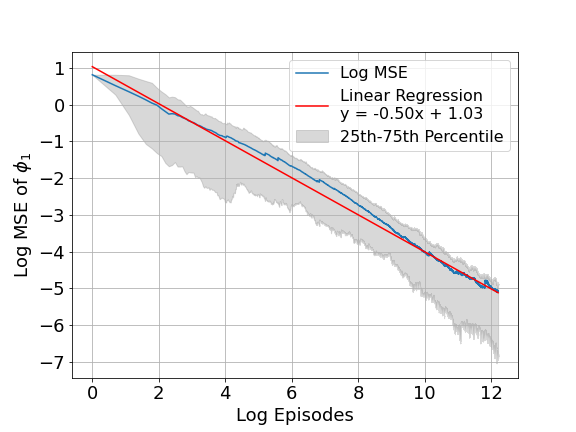}
			\caption{Log-Log plot of MSE of learned $\phi_{1,n}$ in Algorithm  \ref{algo_rl-lq}}
			\label{figure_rate_ours}
		\end{minipage}\hfill
		\begin{minipage}{0.49\textwidth}
			\includegraphics[width=\linewidth]{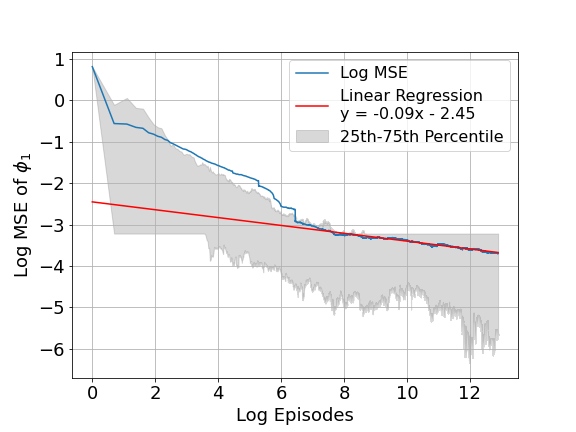}
			\caption{Log-Log plot of MSE of learned $\phi_{1,n}$ in the benchmark algorithm }
			\label{figure_rate_plugin}
		\end{minipage}
	\end{figure}
	
	A comparison of regrets between Algorithms \ref{algo_rl-lq} and  \ref{algo_model-based} is presented in Figures \ref{figure_regret_ours} and \ref{figure_regret_plugin}. The former yields a regret slope of around 0.73, which is close to the theoretical bound stipulated in Theorem \ref{thm_regret_in_main} and superior to the slope of 0.83 achieved by the latter. 
	
	\begin{figure}[ht]
		\centering
		\begin{minipage}{0.49\textwidth}
			\includegraphics[width=\linewidth]{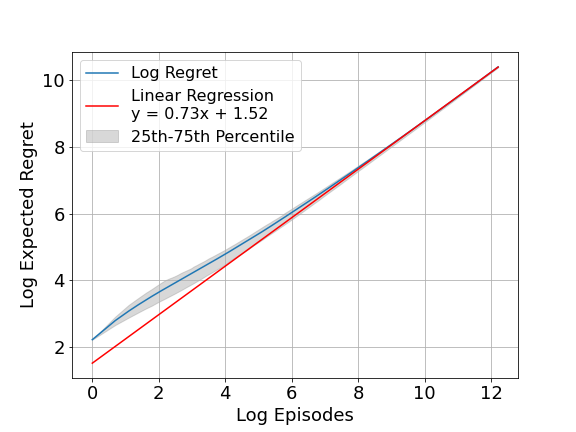}
			\caption{Log-Log plot of the expected regret of  Algorithm \ref{algo_rl-lq}}
			\label{figure_regret_ours}
		\end{minipage}\hfill
		\begin{minipage}{0.49\textwidth}
			\includegraphics[width=\linewidth]{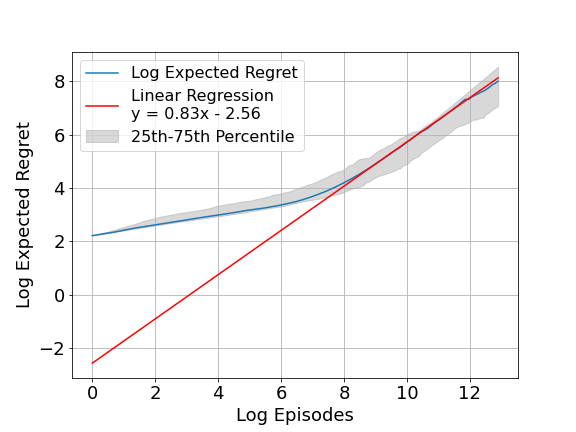}
			\caption{Log-Log plot of the expected regret of the benchmark algorithm}
			\label{figure_regret_plugin}
		\end{minipage}
	\end{figure}
	
	These experimental results support the theoretical claims and demonstrate the outperformance of our RL-LQ algorithm compared with its model-based counterpart in terms of both the convergence rates of the policy parameters and the regret bounds.\footnote{The GitHub repository of the experiments can be accessed at \url{https://github.com/yiliehuang/sublinear-regret-LQ-RL}.}

	\section{Conclusions}
	\label{section_conclusions}
	
	This paper is the first to derive a convergence rate and a regret bound within the model-free framework of continuous-time entropy-regularized RL for controlled diffusion processes initiated by \cite{wang2020reinforcement}. Here, by model-free, we mean that neither theory nor algorithm involves estimating model parameters. While it deals with the LQ case, it treats the case in which the diffusion term depends both on state and control, one that has not been studied in the RL literature to our best knowledge.

	In this paper, the temperature parameter $\gamma$ is fixed and the policy variance $\phi_2$ has a prescribed decaying schedule, both independent of data. These two parameters can be regarded as levels of exploration directly controlled by the critic and the actor respectively.
    A companion paper \cite{HZ25} considers the same problem but develops different RL algorithms where these two exploration mechanisms are both data driven. 
	
	There are several limitations in the setting or results of the paper. First, the state is one-dimensional and the quadratic objective functional (\ref{eq_classical_lq}) has no running reward from controls, which are key assumptions needed to simplify our analysis so that it suffices to consider only (time-invariant) stationary policies.
	While these assumptions are satisfied in some applications (e.g. in portfolio choice), imposing them is far from satisfactory. We hope that the present paper represents the {\it first step} towards solving the general LQ problem and some of the ideas here can inspire a more general convergence/regret analysis for continuous-time RL.
	Second, we are unable to achieve a better sublinear regret, e.g., a square-root one, which is typical in episodic RL algorithms for tabular or linear MDPs. We are  not certain whether that is due to our approach or it is more fundamental due to the diffusion nature of the system dynamics. Additionally, although the value function parameters \(\bm{\theta}\) do not require updates as they do not improve the regret bound, it will be interesting to explore whether updating them could yield better empirical results. Finally, extending the analysis to
	non-LQ problems with general function approximations is an enormous open question. All these point to exciting research opportunities in the (hopefully near) future.

	\bigskip
	
	\noindent{\bf Acknowledgement.} Huang and Zhou are supported by the Nie Center for Intelligent Asset Management at Columbia University. Their work is also part of a Columbia-CityU/HK collaborative project that is supported by the InnoHK Initiative, The Government of the HKSAR, and
	the AIFT Lab. Yanwei Jia is supported by the Start-up Fund and the Faculty Direct Fund 4055220 at The Chinese University of Hong Kong, and Hong Kong Research Grants Council (RGC) - Early Career Scheme (ECS) 2191379. We are especially indebted to the two anonymous referees for their constructive and detailed
	comments that have led to an improved version of the paper. In particular, one of the referees suggested the improved model-based algorithm in Appendix \ref{subsection_modified_algo_app}.

\newpage
\bibliography{references}

\newpage
\appendix

\renewcommand{\theequation}{\thesection.\arabic{equation}}
\renewcommand{\thetheorem}{\thesection.\arabic{theorem}}
\renewcommand{\thetheorem}{\thesection.\arabic{theorem}}
\renewcommand{\thelemma}{\thesection.\arabic{theorem}}  
\renewcommand{\theremark}{\thesection.\arabic{theorem}} 
\renewcommand{\thealgorithm}{\thesection.\arabic{algorithm}}

	\section{Specifics of Numerical Experiments}
	\label{section_details_numerical_experiments}

        \setcounter{equation}{0}
        \setcounter{theorem}{0}
        
        \setcounter{algorithm}{0}

	This section presents the implementation details of the numerical experiments outlined in Section \ref{section_numerical_experiment}. For clarity and simplicity, we set \(l = m = 1\) both our model-free continuous-time RL algorithm and the adapted model-based counterpart. To facilitate reproducibility, we fix the random seeds for all 120 independent experiments, ranging sequentially from 1 to 120. 
	
	The section is organized into three subsections: the first one presents and explains the modified algorithm that adapts the model-based methods \citep{basei2022logarithmic, szpruch2024optimal} to our setting involving state and control dependent volatility. The second one details the experimental conditions, parameter settings, and the overall framework used to validate our claims and assess the performance of our algorithmic enhancements. The third one describes the computational resources used for these experiments.

	\subsection{A Modified Model-Based Algorithm}
	\label{subsection_modified_algo_app}
	The key component in the algorithms developed by \citep{basei2022logarithmic, szpruch2024optimal} is to estimate the parameters \(A\) and \(B\) in the drift term whereas the diffusion term is assumed to be constant. Clearly, these algorithms cannot be used directly to our setting where the diffusion term is state- and control-dependent. Here we extend them to also including estimates of the parameters \(C\) and \(D\).

	Specifically, in the $n$-th iteration, a control policy is sampled from  the Gaussian policy
	\begin{equation}
		\label{eq_plugin_policy}
		u_n(t, x) \sim \mathcal{N}\left(\cdot\middle| \phi_{1,n} x, v_n\right),
	\end{equation}
	where $\phi_{1,n}=-\frac{B_n + C_nD_n}{D_n^2}$, $v_n = \frac{1}{n}$, and $B_n, C_n, D_n$ are the current estimates of the parameters $B, C, D$ respectively.
	Applying this  policy to the classical dynamic \eqref{eq_classical_dynamics} yields
	\begin{equation}
		\label{eq_plugin_dynamics}
		\begin{aligned}
			\dd x_n(t) = \left(A x_n(t) + B u_n(t)\right) d t + \left(C x_n(t) + D u_n(t)\right) d W_n(t),
		\end{aligned}
	\end{equation}
	where $W_n$ is the Brownian motion for the $n$-th iteration.
	
	Given an observed state trajectory $\{x_n(t) : 0 \leq t \leq T\}$ following \eqref{eq_plugin_dynamics}, we discretize it uniformly into $m$ intervals, resulting in the "snapshots" of the state, $\{x_n(t_0), x_n(t_1), \dots, x_n(t_m)\}$. Then we  estimate $A_n$, $B_n$, $C_n$, and $D_n$ after the $n$-th episode, utilizing all the observed data up to $n$.
	
	In particular, we estimate $A_n$ and $B_n$ using a least-square method as described in [\citealp[Equation 2.23]{basei2022logarithmic}]:
	\begin{equation}
		\begin{aligned}
			\label{eq_plugin_AB}
			\binom{A_n}{B_n} = & \left(\sum_{i=1}^n \sum_{k=1}^{m-1} \binom{x_i\left(t_k\right)}{u_i\left(t_k\right)} \binom{x_i\left(t_k\right)}{u_i\left(t_k\right)}^{\top} \Delta t + I \right)^{-1} \\
			& \times \left(\sum_{i=1}^n \sum_{k=1}^{m-1} \binom{x_i\left(t_k\right)}{u_i\left(t_k\right)} \left(x_i\left(t_{k+1}\right) - x_i\left(t_k\right)\right)\right).
		\end{aligned}
	\end{equation}
	On the other hand, 	noting the quadratic variation
	\begin{equation*}
		\left[x_i\right]_t = \int_0^t \left(C x_i(s) + D u_i(s)\right)^2 d s, \qquad \text{for } i = 1, 2, \dots, n,
	\end{equation*}
	we estimate $C_n$ and $D_n$ by minimizing the following loss function:
	\begin{equation}
		\label{eq_plugin_CD}
		\sum_{i=1}^n \left(\sum_{k=1}^{m-1} \left(x_i\left(t_k\right) - x_i\left(t_{k-1}\right)\right)^2 - \sum_{k=1}^{m-1} \left(C x_i\left(t_k\right) + D u_i\left(t_k\right)\right)^2 \Delta t \right)^2.
	\end{equation}
	
	The pseudocode for implementing this modified model-based algorithm is presented below in Algorithm \ref{algo_model-based}.
	
	\begin{algorithm}[htbp]
		\caption{Modified Model-Based Algorithm}\label{algo_model-based}
		\begin{algorithmic}
			\Input
			\Desc{$A_0, B_0$}{Initial drift parameters.}
			\Desc{$C_0, D_0$}{Initial diffusion parameters.}
			\EndInput
			\\\hrulefill
			\For{\texttt{$n = 1$ to $N$}}
			\State Initialize $k=0$, time $t_k = 0$, state $x_n(t_k) = x_0$.
			\For {\texttt{$k = 1$ to $m-1$}}
			\State Calculate the estimated policy mean parameter \(\phi_{1,n}=-\frac{B_n + C_nD_n}{D_n^2}\).
			\State Generate action $u_n(t_k, x) \sim \mathcal{N}\left(\cdot\middle| \phi_{1,n} x, v_n\right)$ following policy \eqref{eq_plugin_policy}.
			\State Apply action $u_n(t_k)$ and get new state $x_n(t_{k+1})$ by dynamic \eqref{eq_plugin_dynamics}.
			\State Update time $t_{k+1} \leftarrow t_k + \Delta t$.
			\EndFor
			\State Update $A_n, B_n$ using \eqref{eq_plugin_AB}.
			\State Update $C_n, D_n$ using \eqref{eq_plugin_CD}.
			\EndFor
			\\\hrulefill
			\Output
			\Desc{$A_N, B_N$}{Final estimated drift parameters.}
			\Desc{$C_N, D_N$}{Final estimated diffusion parameters.}
			\EndOutput
		\end{algorithmic}
	\end{algorithm}

	\subsection{Experiment Setup}
	\label{subsection_experiment_setup_app}
	The specific  setup for the experiment applying the model-free Algorithm \ref{algo_rl-lq}  is as follows:
	\begin{itemize}
        \setlength\itemsep{0.1em}
		\item The initial value for $\phi_1$ is $\phi_{1,0}=-0.5$.
		\item The leaning rate of $\phi_1$ is $a_{n} = \frac{0.05}{(n+1)^{\frac{3}{4}}}$.
		\item The projection for $\phi_{1,n}$ is set to be a constant set of $[-2.2, -0.5]$ for computational efficiency.\footnote{The projection was originally set to prove the theoretical convergence rate and regret bound. For implementation the theoretical projection grows too slow; instead it could be tuned. }
		\item The exploration schedule is $\phi_{2,n}=\frac{1}{b_n}$ where $b_n = 0.2 (n+1)^{\frac{1}{4}}$.
		\item The functions $\hat{k}_1(t;\bm\theta)=1$ and $\hat{k}_3(t;\bm\theta)=1$ for simplicity, which satisfy the assumptions in Subsection \ref{subsection_paramtrization}.\footnote{Recall that our results do not dependent on the form of the value function.}
		\item The parameters $\bm{\theta}$ need not to be learned, as the value function is not updated. 
		\item The temperature parameter $\gamma=1$.
		\item The initial state $x_0=1$.
		\item The time horizon $T=1$.
		\item The time step $\Delta t = 0.01$.
		\item The total number of episodes for each experiment $N=200,000$.
	\end{itemize}

	The specifics  of implementing the adapted model-based Algorithm \ref{algo_model-based} are:
	\begin{itemize}
        \setlength\itemsep{0.1em}
		\item The initial estimates of the model parameters are set to be $A = B = C = D = -2$, aligning with the initial value $\phi_{1,0} = -0.5$ in Algorithm \ref{algo_rl-lq}.
		\item The initial variance of the stochastic policy is set to be $v_0 = 5$, consistent with the initial value $\phi_{2,0} = 5$ in Algorithm \ref{algo_rl-lq}.
		\item To ensure fairness, we apply the same projection set $[-2.2, -0.5]$ to $\phi_{1,n}$ as in Algorithm \ref{algo_rl-lq} to stabilize the learning process.
		\item The initial state $x_0 = 1$.
		\item The time horizon $T = 1$.
		\item The time step $\Delta t = 0.01$.
		\item The total number of episodes for each experiment is $N = 200,000$.
	\end{itemize}
	
	For Figures \ref{figure_rate_ours} - \ref{figure_regret_plugin}, the regression line is fitted using data from the 5000th iteration onward to mitigate the impact of initial noise.

	\subsection{Compute Resources}
	\label{subsection_compute_resources}
	All experiments were performed on a MacBook Pro (16-inch, 2019) equipped with a 2.4 GHz 8-Core Intel Core i9 processor, 32 GB of 2667 MHz DDR4 memory, and dual graphics processors, comprising an AMD Radeon Pro 5500M with 8 GB and an Intel UHD Graphics 630 with 1536 MB. Not having a high-powered server, this consumer-grade laptop was sufficient to handle the computational task of conducting 120 independent experiments sequentially, each running for 200,000 episodes. The model-free actor--critic algorithm required approximately 10 hours for a complete sequential run, whereas the model-based plugin algorithm took about 20 hours. This significant difference in running times also demonstrates the efficiency of our model-free approach compared with the model-based one.

	\newpage
	\section{Additional Details about Proofs}
	\label{section_additional_proof_app}

        \setcounter{equation}{0}
        \setcounter{theorem}{0}
        
        \setcounter{algorithm}{0}

	\subsection{Moment Estimates}
	\label{section_moments_app}
	Let $\{x^{\bm\phi}(t): 0\leq t\leq T\}$ be the state process under the policy \eqref{policy_parametrization} following the dynamic \eqref{rl_dynamics}. The following lemma gives some moment estimates of  \(x^{\bm\phi}(t)\) in terms of \(\bm\phi=(\phi_1,\phi_2)\).
	
	\begin{lemma}
		\label{lemma_xt-w}
		We have
		\[
		\E[x^{\bm\phi}(t)] = x_0 e^{(A + B^\top \phi_1)t},
		\]
		\[
		\begin{aligned}
			\E [x^{\bm\phi}(t)^2] &= (\Sj  D_j^\top \phi_2 D_j) \int_0^t e^{a(\phi_1)(t-s)} \dd s + x_0^2e^{a(\phi_1)t},
		\end{aligned}
		\]
		where
		\begin{equation}
			\label{eq_a_phi_1}
			a(\phi_1) = 2A + 2B^\top \phi_1 + \Sj (C_j^2 + 2C_jD_j^\top\phi_1 + D_j^\top\phi_1\phi_1^\top D_j).
		\end{equation}
		Moreover, there exists a constant $c > 0$ (that only depends on $A, B, C, D, x_0$) such that
		\[
		\E[x^{\bm\phi}(t)^2] \leq c  (1 + |\phi_2| t) \exp{\{c |\phi_1|^2 t\}},
		\]
		\[
		\E[x^{\bm\phi}(t)^4] \leq c  (1 + |\phi_2|^2 t) \exp{\{c |\phi_1|^4 t\}},
		\]
		\[
		\E[x^{\bm\phi}(t)^6] \leq c  (1 + |\phi_2|^3 t) \exp{\{c |\phi_1|^6 t\}}.
		\]
	\end{lemma}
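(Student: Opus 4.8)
The plan is to reduce the exploratory dynamics \eqref{rl_dynamics} under the Gaussian policy \eqref{policy_parametrization} to an explicit scalar SDE, and then read off the moments through It\^o's formula. Since $u\sim\mathcal N(\phi_1 x,\phi_2)$, evaluating \eqref{b_tilde}--\eqref{sigma_tilde} gives $\widetilde b=(A+B\phi_1)x$ and $\widetilde\sigma^2=\E_{u}[(Cx+Du)^2]=(C+D\phi_1)^2x^2+D^2\phi_2$, so that
\[
\dd x^{\bm\phi}(t)=(A+B\phi_1)x^{\bm\phi}(t)\,\dd t+\sqrt{(C+D\phi_1)^2 x^{\bm\phi}(t)^2+D^2\phi_2}\,\dd W(t).
\]
The drift is linear and the diffusion coefficient has linear growth (indeed is globally Lipschitz) in $x$, so this SDE admits a unique strong solution with finite moments of every order on $[0,T]$; this a priori integrability is what later legitimizes treating the stochastic integrals as true martingales.

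Next I would apply It\^o's formula to $x$ and $x^2$ and take expectations. For the first moment the diffusion term drops, leaving $\tfrac{\dd}{\dd t}\E[x]=(A+B\phi_1)\E[x]$, hence $\E[x^{\bm\phi}(t)]=x_0e^{(A+B\phi_1)t}$. For the second moment, $\dd(x^2)=\big(2(A+B\phi_1)x^2+\widetilde\sigma^2\big)\dd t+(\cdots)\dd W$ gives, after taking expectations, the linear ODE $\tfrac{\dd}{\dd t}\E[x^2]=a(\phi_1)\E[x^2]+D^2\phi_2$ with $a(\phi_1)=2(A+B\phi_1)+(C+D\phi_1)^2$, matching \eqref{eq_a_phi_1}; variation of constants then produces the stated closed form. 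To make the expectation step rigorous I would localize with the stopping times $\tau_m=\inf\{t:|x^{\bm\phi}(t)|\ge m\}$, so that the martingale term vanishes in expectation for the stopped process, and pass $m\to\infty$ by monotone convergence together with the a priori moment bounds.

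For the three inequalities I would induct on the even moment order $2k\in\{2,4,6\}$. Each even moment obeys a linear ODE
\[
\tfrac{\dd}{\dd t}\E[x^{2k}]=a_{2k}(\phi_1)\,\E[x^{2k}]+k(2k-1)D^2\phi_2\,\E[x^{2k-2}],
\]
where $a_{2k}(\phi_1)=2k(A+B\phi_1)+k(2k-1)(C+D\phi_1)^2\le c(1+|\phi_1|^2)$. Using the Duhamel representation
\[
\E[x^{2k}(t)]=x_0^{2k}e^{a_{2k}(\phi_1)t}+k(2k-1)D^2\phi_2\int_0^t e^{a_{2k}(\phi_1)(t-s)}\E[x^{2k-2}(s)]\,\dd s
\]
and inserting the bound on $\E[x^{2k-2}(s)]$ already obtained at the previous stage, the crux is to dominate the product $e^{a_{2k}(\phi_1)(t-s)}e^{c|\phi_1|^{2(k-1)}s}$ by a single exponential. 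This I would do by bounding both rates by the common $\hat b=c(1+|\phi_1|^{2k})$ via the elementary inequality $1+|\phi_1|^{2j}\le 2(1+|\phi_1|^{2j+2})$, which collapses the product to $e^{\hat b t}\le c\,e^{c|\phi_1|^{2k}t}$ and is exactly what forces the exponent to climb from $|\phi_1|^2$ to $|\phi_1|^4$ to $|\phi_1|^6$. The surviving integral contributes only a polynomial prefactor, which I would absorb into the advertised $c(1+|\phi_2|^{k}t)$ using $t\le T$ and the case split $|\phi_2|\le 1$ versus $|\phi_2|>1$.

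The step I expect to be the main obstacle is this last merging of exponential rates together with the accompanying prefactor bookkeeping: one must combine the homogeneous contribution and the lower-order source term into one exponential of the advertised order while simultaneously tracking the polynomial dependence on $\phi_2$ and $t$. Matching the powers correctly---and recognizing that the deliberately loose exponents $|\phi_1|^4$ and $|\phi_1|^6$ are precisely what let the induction close---is where the care lies; by contrast, existence, uniqueness, and the justification of the It\^o-plus-expectation computations are routine given the linear-growth structure.
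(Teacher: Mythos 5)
Your proposal is correct, and the first half of it (reduction to the explicit scalar SDE, first- and second-moment ODEs via It\^o plus expectation, variation of constants, localization by stopping times) coincides with the paper's argument. Where you diverge is in the higher-moment inequalities. The paper does not set up the hierarchy of exact moment ODEs: instead it writes $x^{\bm\phi}(t)$ in integral form, bounds $\E[x^{\bm\phi}(t)^6]$ by $c\bigl(x_0^6+\E[(\int_0^t\cdot\,\dd s)^6]+\E[(\int_0^t\cdot\,\dd W)^6]\bigr)$, controls the drift integral by H\"older and the stochastic integral by the Burkholder--Davis--Gundy inequality (reducing the sixth power of the martingale to the third power of its quadratic variation), and then closes with Gr\"onwall applied to the resulting integral inequality $\E[x^6(t)]\le c(x_0^6+|\phi_2|^3t)+c(1+|\phi_1|^6)\int_0^t\E[x^6(s)]\,\dd s$. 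Your route --- the exact recursion $\frac{\dd}{\dd t}\E[x^{2k}]=a_{2k}(\phi_1)\E[x^{2k}]+k(2k-1)D^2\phi_2\E[x^{2k-2}]$ followed by Duhamel and induction on $k$, merging the two exponential rates into the common $c(1+|\phi_1|^{2k})$ --- is equally valid and in fact sharper, since it produces identities rather than inequalities at each stage and avoids BDG altogether; your bookkeeping of the rate merging and the $|\phi_2|\le 1$ versus $|\phi_2|>1$ split checks out. The trade-off is that the paper's BDG--Gr\"onwall argument is more robust (it does not rely on the even moments closing into a finite ODE system, so it would survive perturbations of the dynamics for which your exact recursion breaks down), while yours exploits the special linear--quadratic structure to keep every constant explicit.
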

	
	\begin{proof}
		We have 
		\[
		\begin{aligned}
			x^{\bm\phi}(t) &= x(0) + \int_0^t \left(Ax^{\bm\phi}(s) + B^\top \phi_1 x^{\bm\phi}(s)\right) \dd s \\
			&+ \int_0^t \Sj \sqrt{C_j^2x^{\bm\phi}(s)^2 + 2C_jD_j^\top\phi_1 x^{\bm\phi}(s)^2 + D_j^\top(\phi_1 \phi_1^\top x^{\bm\phi}(s)^2 + \phi_2)D_j} \dd W^{(j)}(s).
		\end{aligned}
		\]
		Taking expectation on both sides, we have
		\[
		\E [x^{\bm\phi}(t)] = x_0 + \int_0^t (A \E[x^{\bm\phi}(s)] + B^\top \phi_1 \E[x^{\bm\phi}(s)] )\dd s,
		\]
		leading to
		\begin{equation}
			\label{x_moment1}
			\E[x^{\bm\phi}(t)] = x_0 e^{(A + B^\top \phi_1)t}.
		\end{equation}
		Next, applying Ito's formula to $x^{\bm\phi}(t)^2$ and taking expectation on both sides, we obtain
		\[
		\E[x^{\bm\phi}(t)^2] = x_0^2 + \int_0^t \left(a(\phi_1)\E[x^{\bm\phi}(s)^2] + \Sj D_j^\top \phi_2 D_j\right) \dd s,
		\]
		yielding
		\begin{equation}
			\label{x_moment2}
			\E[x^{\bm\phi}(t)^2] = (\Sj  D_j^\top \phi_2 D_j) \int_0^t e^{a(\phi_1)(t-s)} \dd s + x_0^2e^{a(\phi_1)t}.
		\end{equation}
		
		Now we prove the inequality related to $\E[x^{\bm\phi}(t)^6]$. Hölder's inequality yields
		\[
		\begin{aligned}
			&\E[x^{\bm\phi}(t)^6] \\
			=& \E\biggl[\biggl( x(0) + \int_0^t Ax^{\bm\phi}(s) + B^\top \phi_1 x^{\bm\phi}(s) \dd s \\
			&+ \int_0^t \Sj \sqrt{C_j^2x^{\bm\phi}(s)^2 + 2C_jD_j^\top\phi_1 x^{\bm\phi}(s)^2 + D_j^\top(\phi_1 \phi_1^\top x^{\bm\phi}(s)^2 + \phi_2)D_j} \dd W^{(j)}(s) \biggl)^6\biggl]\\
			\leq& c x_0^6 + c \E\biggl[\biggl(\int_0^t Ax^{\bm\phi}(s) + B^\top \phi_1 x^{\bm\phi}(s) \dd s\biggl)^6\biggl] + c\E\biggl[\biggl(\int_0^t \Sj \\
			&\qquad\sqrt{C_j^2x^{\bm\phi}(s)^2 + 2C_jD_j^\top\phi_1 x^{\bm\phi}(s)^2 + D_j^\top(\phi_1 \phi_1^\top x^{\bm\phi}(s)^2 + \phi_2)D_j} \dd W^{(j)}(s)\biggl)^6\biggl]\\
			\leq& c x_0^6 + c (A + B^\top \phi_1)^6 \E\biggl[\biggl(\int_0^t x^{\bm\phi}(s) \dd s\biggl)^6\biggl] \\
			&+ c\E\biggl[ \Sj \biggl(\int_0^t C_j^2x^{\bm\phi}(s)^2 + 2C_jD_j^\top\phi_1 x^{\bm\phi}(s)^2 + D_j^\top(\phi_1 \phi_1^\top x^{\bm\phi}(s)^2 + \phi_2)D_j) \dd s\biggl)^3\biggl]\\
			\leq& c x_0^6 + c(1+|\phi_1|^6) \E\biggl[\int_0^t x^{\bm\phi}(s)^6 \dd s\biggl] + c\E\biggl[ \int_0^t (1+|\phi_1|^6)x^{\bm\phi}(s)^6+|\phi_2|^3 \dd s \biggl].
		\end{aligned}
		\]
		It follows from Grönwall's inequality that
		\[
		\begin{aligned}
			\E[x^{\bm\phi}(t)^6]\leq& c(1+|\phi_2|^3t) \exp{\{\int_0^t c(1+|\phi_1|^6) \dd s\}}\\
			=& c(1+|\phi_2|^3t) \exp{ \{c(1+|\phi_1|^6) t\}}\\
			\leq&c(1+|\phi_2|^3t) \exp{ \{c|\phi_1|^6 t\}}.
		\end{aligned}
		\]
		The proofs for the inequalities of $\E[x^{\bm\phi}(t)^2]$ and $\E[x^{\bm\phi}(t)^4]$ are similar.
	\end{proof}

	The next lemma  concerns the variance of the increment \(Z_{1, n}(T)\).
	
	\begin{lemma}
		\label{lemma_nosie_app}
		There exists a constant \(c > 0\) that depends only on the model primitives such that
		
		\begin{equation}
			\label{eq:noise_upper1}
			\begin{aligned}
				\operatorname{Var}\left( Z_{1,n}(T) \Big| \bm\theta_n, \phi_{1,n}, \phi_{2,n} \right)  \leq& c b_n\left( 1 +  |\phi_{1,n}|^{8} + (\log b_n)^8\right) \exp{\{c|\phi_{1,n}|^6\}}.\\
			\end{aligned}
		\end{equation}
	\end{lemma}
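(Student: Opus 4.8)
The plan is to bound the variance by the (conditional) second moment, $\operatorname{Var}(Z_{1,n}(T)\mid\bm\theta_n,\phi_{1,n},\phi_{2,n})\le \E[Z_{1,n}(T)^2\mid\cdots]$, and then to exploit the explicit Gaussian structure of the policy. First I would record that for $\pi=\mathcal{N}(u\mid\phi_1 x,\phi_2)$ the score is $\frac{\partial\log\pi}{\partial\phi_1}(u\mid x;\bm\phi)=\frac{(u-\phi_1 x)x}{\phi_2}$ and the differential entropy $p(t;\bm\phi)=\frac12\log(2\pi e\phi_2)$ depends on neither $t$ nor $\phi_1$, so $\frac{\partial p}{\partial\phi_1}\equiv 0$ and the last term of \eqref{eq:z1_def1} drops out. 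Writing $\epsilon_n(t):=u_n(t)-\phi_{1,n}x_n(t)$, which given $x_n(t)$ is $\mathcal{N}(0,\phi_{2,n})$ and independent of the driving noise $W_n$, and using $\phi_{2,n}=1/b_n$, the score equals $b_n\,\epsilon_n(t)\,x_n(t)$, so the factor $b_n$ is made explicit up front.

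Next I would apply It\^o's formula to $J(t,x;\bm\theta_n)=-\frac12 k_1(t;\bm\theta_n)x^2+k_3(t;\bm\theta_n)$ along \eqref{eq_dynamics_app} to split $\dd J$ into a finite-variation part $\mathcal{L}J\,\dd t$ and a martingale part $-k_1(t;\bm\theta_n)x_n(t)(Cx_n(t)+Du_n(t))\,\dd W_n(t)$. Substituting into \eqref{eq:z1_def1} decomposes $Z_{1,n}(T)=I^{\mathrm{bv}}+I^{\mathrm{mart}}$ into a Lebesgue integral and a stochastic integral, whence $\E[Z_{1,n}(T)^2]\le 2\E[(I^{\mathrm{bv}})^2]+2\E[(I^{\mathrm{mart}})^2]$. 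For the stochastic integral I would use It\^o's isometry, $\E[(I^{\mathrm{mart}})^2]=\E\int_0^T(\text{integrand})^2\,\dd t$, and for the Lebesgue integral Cauchy--Schwarz, $\E[(I^{\mathrm{bv}})^2]\le T\,\E\int_0^T(\text{integrand})^2\,\dd t$. Both reduce the problem to bounding, pointwise in $t$, the conditional second moment of each integrand.

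The heart of the estimate is the moment bookkeeping for these pointwise integrands. Each carries the prefactor $b_n\,\epsilon_n(t)\,x_n(t)$, and its remaining factor is a polynomial in $x_n(t)$, $\epsilon_n(t)$ and $\log b_n$ whose coefficients are controlled by $c_1,c_2,c_3,\gamma$ and the model primitives: the finite-variation integrand is $O\big(b_n|\epsilon_n||x_n|\,[\,(1+\phi_{1,n}^2)x_n^2+(1+|\phi_{1,n}|)|x_n||\epsilon_n|+\epsilon_n^2+\log b_n\,]\big)$ and the martingale integrand is $O\big(b_n|\epsilon_n||x_n|\,[\,(1+|\phi_{1,n}|)|x_n|+|\epsilon_n|\,]|x_n|\big)$. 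Squaring and taking the conditional expectation over $\epsilon_n(t)$ first (with $x_n(t)$ fixed), all odd powers of $\epsilon_n(t)$ vanish, while $\E[\epsilon_n(t)^2]=1/b_n$, $\E[\epsilon_n(t)^4]=3/b_n^2$, $\E[\epsilon_n(t)^6]=15/b_n^3$; in every surviving term one power of $b_n^{-1}$ cancels one of the two $b_n$'s from the squared prefactor, leaving a single factor $b_n$. The highest-degree monomial in $x_n(t)$ that appears is $x_n(t)^6$. Taking the expectation over $x_n(t)$ and invoking Lemma \ref{lemma_xt-w} — whose sixth-moment bound contributes the factor $\exp\{c|\phi_{1,n}|^6\}$ and whose $\phi_2$-factors are harmless since $\phi_{2,n}=1/b_n\le 1$ — and then integrating over $t\in[0,T]$, I would collect all polynomial-in-$\phi_{1,n}$ and powers-of-$\log b_n$ prefactors into the stated (deliberately loose) envelope $1+|\phi_{1,n}|^8+(\log b_n)^8$, giving the claimed bound.

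The main obstacle is not any single estimate but the careful organization of the expansion: keeping the exploration noise $\epsilon_n(t)$ statistically separate from $x_n(t)$ and from $\dd W_n(t)$, so that the vanishing of odd moments and the exact scaling $\E[\epsilon_n^{2k}]\sim b_n^{-k}$ can be used; and ensuring the It\^o/Cauchy--Schwarz split is taken before squaring so that no spurious time-correlation cross terms of order $b_n^2$ are generated. Establishing the linear-in-$b_n$ (rather than quadratic) growth is the decisive point, and it rests entirely on the smallness $\phi_{2,n}=1/b_n$ of the exploration variance exactly compensating the $1/\phi_{2,n}^2=b_n^2$ blow-up of the squared score.
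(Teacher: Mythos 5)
Your proposal is correct and follows essentially the same route as the paper: bound the variance by the second moment, apply It\^o's formula to $J$ to split $\dd Z_{1,n}$ into drift and martingale parts, control each integrand's conditional second moment by first averaging over the Gaussian exploration noise (whose even moments scale as powers of $\phi_{2,n}=1/b_n$ and cancel one factor of $b_n$ from the squared score), and then invoke the moment bounds of Lemma \ref{lemma_xt-w}, with the sixth moment supplying the $\exp\{c|\phi_{1,n}|^6\}$ factor. The paper's proof carries out exactly this bookkeeping, so no further comparison is needed.
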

	
	\begin{proof}
		Applying Ito's lemma to the process \(J\left(t, x_n(t); \bm{\theta}_n \right)\), where $x_n$ follows \eqref{eq_dynamics_app}, we have
		\[
		\begin{aligned}
			&\mathrm{d} J\left(t, x_n(t); \bm\theta_n \right) \\
			=& \biggl( -\frac{1}{2}k_1^\prime(t; \bm\theta_n)x_n(t)^2 + k_3^\prime(t; \bm\theta_n) - (Ax_n(t) + B^\top u_n(t))k_1(t; \bm\theta_n)x_n(t) \\
			&-\frac{\Sj (C_jx_n(t)+D_j^\top u_n(t))^2}{2} k_1(t; \bm\theta_n) \biggl) \dd t \\
			&- \Sj \biggl( (C_jx_n(t)+D_j^\top u_n(t)) k_1(t; \bm\theta_n) x_n(t) \biggl) \dd W_n^{(j)}(t).
		\end{aligned}
		\]
		In addition, 
		\[
		{p}(t, \bm\phi_n) = \frac{1}{2} \log( \det (\phi_{2,n})) + \frac{l}{2}\log(2 \pi e).
		\]
		Hence
		\begin{equation}
			\label{eq_dz1_app}
			\begin{aligned}
				&\dd Z_{1, n}(t) \\
				=& \phi_{2,n}^{-1}(u_n(t)-\phi_{1,n}x_n(t))x_n(t) \biggl [ \biggl( -\frac{1}{2}k_1^\prime(t; \bm\theta_n)x_n(t)^2 + k_3^\prime(t; \bm\theta_n)  \\
				&- (Ax_n(t) + B^\top u_n(t))k_1(t; \bm\theta_n)x_n(t) - \frac{\Sj (C_jx_n(t)+D_j^\top u_n(t))^2}{2} k_1(t; \bm\theta_n) \biggl) \dd t  \\
				&- \Sj \biggl( (C_jx_n(t)+D_j^\top u_n(t)) k_1(t; \bm\theta_n) x_n(t) \biggl) \dd W_n^{(j)}(t) - \frac{1}{2}Qx_n(t)^2 \dd t\\
				&+ \gamma \biggl( \frac{1}{2} \log( \det (\phi_{2,n})) + \frac{l}{2}\log(2 \pi e) \biggl) \dd t \biggl]\\
				=& \phi_{2,n}^{-1}(u_n(t)-\phi_{1,n}x_n(t))x_n(t) \biggl [ \biggl( -\frac{1}{2}k_1^\prime(t; \bm\theta_n)x_n(t)^2 + k_3^\prime(t; \bm\theta_n)  \\
				&- (Ax_n(t) + B^\top u_n(t))k_1(t; \bm\theta_n)x_n(t)\\
				&- \frac{\Sj (C_jx_n(t)+D_j^\top u_n(t))^2}{2} k_1(t; \bm\theta_n) \biggl) \\
				&- \frac{1}{2}Qx_n(t)^2 + \gamma \biggl( \frac{1}{2} \log( \det (\phi_{2,n})) + \frac{l}{2}\log(2 \pi e) \biggl) \biggl] \dd t\\
				&- \phi_{2,n}^{-1}(u_n(t)-\phi_{1,n}x_n(t))x_n(t)  \Sj \biggl( (C_jx_n(t)+D_j^\top u_n(t)) k_1(t; \bm\theta_n) x_n(t) \biggl) \dd W_n^{(j)}(t)\\
				\triangleq& Z_{1,n}^{(1)}(t) \dd t + \Sj Z_{1,n}^{(2, j)}(t) \dd W_n^{(j)}(t).
			\end{aligned}
		\end{equation}
		
		We now estimate 
		\[
		\begin{aligned}
			\E[|Z_{1,n}^{(1)}(t)|^2 | \bm\theta_n, \phi_{1,n}, \phi_{2,n}, x_n(t)] \leq& c  \biggl[ (1+|\phi_{1,n}|^4)|\phi_{2,n}^{-1}|x_n(t)^6 + (1+|\phi_{1,n}|^2)x_n(t)^4 \\
			&+ (1 + |\phi_{2,n}|^2+(\log(\det(\phi_{2,n})))^4)|\phi_{2,n}^{-1}|x_n(t)^2 \biggl],
		\end{aligned}
		\]
		and
		\[
		\begin{aligned}
			\E[|Z_{1,n}^{(2,j)}(t)|^2 | \bm\theta_n, \phi_{1,n}, \phi_{2,n}, x_n(t)] \leq& c \biggl[ 1 + |\phi_{2,n}^{-1}|(1+|\phi_{1,n}|^2)x_n(t)^6 +  x_n(t)^4\biggl].
		\end{aligned}
		\]
		By Lemma \ref{lemma_xt-w}, taking expectations in the above with respect to $x_n(t)$, we deduce
		\begin{equation}
			\label{eq_Z1_square_app}
			\begin{aligned}
				&\E[|Z_{1,n}^{(1)}(t)|^2 + \Sj |Z_{1,n}^{(2,j)}(t)|^2 | \bm\theta_n, \phi_{1,n}, \phi_{2,n}] \\
				\leq& c  \biggl[ (1+|\phi_{1,n}|^4)|\phi_{2,n}^{-1}|(1+|\phi_{2,n}t|^3)\exp{\{c|\phi_{1,n}|^6t\}} \\
				&+(1+|\phi_{1,n}|^2)(1+|\phi_{2,n}t|^2)\exp{\{c|\phi_{1,n}|^4t\}} \\
				&+ (1+|\phi_{2,n}|^2+(\log(\det (\phi_{2,n})))^4)|\phi_{2,n}^{-1}|(1+|\phi_{2,n}t|)\exp{\{c|\phi_{1,n}|^2t\}} \biggl]. \\
			\end{aligned}
		\end{equation}
		Recalling that $\phi_{2,n}=\frac{I_l}{b_n}$ set  in Algorithm \ref{algo_rl-lq}, we arrive at \eqref{eq:noise_upper1}.
	\end{proof}

	\subsection{Mean Increment}
	\label{appendix:mean increment}	
	We now analyze the mean increment \\
	\(h_1(\phi_{1,n}, \phi_{2,n}; \bm\theta_n)\) in the updating rule \eqref{eq_phi1_update_app}. 
	First, note that \(\int_0^{\cdot} Z_{1,n}^{(2,j)}(t) \dd W_n^{(j)}(t)\) is a martingale by virtue of Lemma \ref{lemma_xt-w} and \eqref{eq_Z1_square_app}.
	Taking  integration and expectation in \eqref{eq_dz1_app}, we get
	\begin{equation}
		\label{eq_dez1_app}
		\begin{aligned}
			\E[Z_{1,n}(s)] 
			&= -\int_0^s k_1(t; \bm\theta_n)(B+\Vj+(\Sj D_j D_j^\top)\phi_{1,n}) \E [x_n(t)^2] \dd t,
		\end{aligned}
	\end{equation}
	where $0\leq s\leq T$.
	Hence
	\begin{equation}
		\label{eq_h1_app}
		\begin{aligned}
			h_1(\phi_{1,n}, \phi_{2,n}; \bm\theta_n) &= -(B+\Vj+(\Sj D_j D_j^\top)\phi_{1,n})\int_0^T k_1(t; \bm\theta_n)\E [x_n(t)^2]  \dd t\\
			&=-l(\phi_{1,n}, \phi_{2,n}; \bm\theta_n)(\phi_{1,n} - \phi_1^*),
		\end{aligned}
	\end{equation}
	where
	\begin{equation}
		\label{eq_l_definition}
		l(\phi_{1,n}, \phi_{2,n}; \bm\theta_n) = (\Sj D_j D_j^\top) \int_0^T k_1(t; \bm\theta_n)\E [x_n(t)^2] \dd t.
	\end{equation}

	Next we study $h_1(\phi_{1,n}, \phi_{2,n}; \bm\theta_n)$. It follows from  \eqref{eq_a_phi_1} that $a(\phi_1)$ is a quadratic function of $\phi_1$ and
	\[a(\phi_1) \geq 2A + \Sj C_j^2 - (B+\Vj)^\top \Mj (B+\Vj).\]
	Hence, by
	\eqref{x_moment2}, we have 
	\begin{equation}
		\label{eq_m2_lower}
		\begin{aligned}
			\E [x_n(t)^2] &= (\Sj  D_j^\top \phi_{2,n} D_j) \int_0^t e^{a(\phi_{1,n})(t-s)} \dd s + x_0^2e^{a(\phi_{1,n})t}\\
			&\geq x_0^2e^{a(\phi_{1,n})t} \geq c,
		\end{aligned}
	\end{equation}
	where $c>0$ is a constant independent of $n$.
	Thus,
	\begin{equation}
		\label{eq_h1_l_lower_bound_app}
		\begin{aligned}
			l(\phi_{1,n}, \phi_{2,n}; \bm\theta_n) &= (\Sj D_j D_j^\top) \int_0^T k_1(t; \bm\theta_n)\E [x_n(t)^2] \dd t\\
			&\geq (\Sj D_j D_j^\top) \int_0^T (1/c_2)c  \dd t\\
			&= (\Sj D_j D_j^\top) (1/c_2)c T \geq \bar{c} I_l,
		\end{aligned}
	\end{equation}
	where $0 < \bar{c} < 1$ is a constant independent of $n$.
	
	
	
	On the other hand, since  $a(\phi_1)$ is a quadratic function of $\phi_1$, we have
	\[ |a(\phi_1)|\leq c(1+|\phi_1|^2),\;\;\forall \phi_1\in \mathbb{R}^l,\]
	for some constant $c>0$. So
	\[
	\begin{aligned}
		\E [x_n(t)^2] &= (\Sj  D_j^\top \phi_{2,n} D_j) \int_0^t e^{a(\phi_{1,n})(t-s)} \dd s + x_0^2e^{a(\phi_{1,n})t}\\
		&\leq c(1+|\phi_{2,n}|)Te^{c|\phi_{1,n}|^2T},
	\end{aligned}
	\]
	leading to
	\[
	\begin{aligned}
		l(\phi_{1,n}, \phi_{2,n}; \bm\theta_n) &= (\Sj D_j D_j^\top) \int_0^T k_1(t; \bm\theta_n)\E [x_n(t)^2] \dd t\\
		&\leq (\Sj D_j D_j^\top) c_2T^2c(1+|\phi_{2,n}|)e^{c|\phi_{1,n}|^2T}.
	\end{aligned}
	\]
	Recalling that $\phi_{2,n} = \frac{I_l}{b_n}$, we arrive at
	\begin{equation}
		\begin{aligned}
			\label{eq_h1n_upper_app}
			|h_1(\phi_{1,n}, \phi_{2,n}; \bm\theta_n)|& \leq |\phi_{1,n} - \phi_1^*| |l(\phi_{1,n}, \phi_{2,n}; \bm\theta_n)|\\
			&\leq c(1+|\phi_{1,n}|)e^{c|\phi_{1,n}|^2}
		\end{aligned}
	\end{equation}
	for some constant $c>0$.

	\subsection{Almost Sure Convergence of $\phi_{1,n}$}
	
	We are now ready to  prove Part (a) of Theorem \ref{thm_convergence_and_rate_in_main} regarding the almost sure convergence of \( \phi_{1,n} \). Here and henceforth we will prove more general results by allowing a bias term \(\beta_{1, n}=\E\left[ \xi_{1, n} \mid \mathcal{G}_n\right] \) that may account for errors arising from  practical implementations (e.g. the discretization errors; see the proof of Theorem \ref{thm_convergence_and_rate_in_main} for details). The following theorem specializes to Part (a) of Theorem \ref{thm_convergence_and_rate_in_main} when \(\beta_{1, n} = \mathbf{0}\).
	
	\begin{theorem}
		\label{thm:phi1_convergence}
		Assume the noise term $\xi_{1,n}$ satisfies $\E\left[ \xi_{1, n} \Big|\g_n\right] = \beta_{1, n}$ and
		\begin{equation}\label{eq:phi1_noise_assumption}
			\begin{aligned}
				\E\left[ \left|\xi_{1, n} - \beta_{1, n} \right|^2 \Big| \g_n \right]  \leq& c  b_n( 1 +  |\phi_{1,n}|^{8} + (\log b_n)^8) \exp{\{c|\phi_{1,n}|^6\}}, \\
			\end{aligned}
		\end{equation}
		where $c>0$ is a constant independent of $n$, and $\{\g_n\}$ is the filtration generated by $\{\bm\theta_m, \phi_{1,m}, \phi_{2,m}, m=0,1,2,...,n\}$.
		Moreover, assume
		\begin{equation} \label{eq_phi1_assumption_app}
			\begin{aligned}
				(i)& \quad 0 < a_n \leq 1 \text{ for all } n, \quad a_n \downarrow 0, \quad \sum_n a_n = \infty, \quad \sum_n a_n |\beta_{1, n}| < \infty; \\
				(ii)& \quad c_{1, n}\uparrow \infty, \quad \sum_n a_n^2 b_n^{q_1} (\log b_n)^{q_2} c_{1,n}^{q_3} e^{cc_{1, n}^{q_4}} < \infty \\
				&\quad \text{for any } c>0 \text{, } 0\leq q_1 \leq 1 \text{ , } 0\leq q_2 \leq 8 \text{ , } 0\leq q_3 \leq 8 \text{ and } 0\leq q_4 \leq 6; \\
				(iii)& \quad b_n\geq1 \text{ for all }n, \quad b_n\uparrow \infty.
			\end{aligned}
		\end{equation}
		Then $\phi_{1,n}$ almost surely converges to the unique equilibrium point
		\[\phi_1^* = -(\sum_{j=1}^{m} D_j D_j^\top)^{-1}(B + \sum_{j=1}^{m}C_jD_j).\]
	\end{theorem}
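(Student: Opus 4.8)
The plan is to run a projected stochastic-approximation argument organized around the quadratic Lyapunov function $V_n := (\phi_{1,n}-\phi_1^*)^2$ and to close it with the Robbins--Siegmund almost-supermartingale convergence theorem. The two structural inputs are already in hand: by \eqref{eq_h1_app} the mean field is a genuine restoring force, $h_1(\phi_{1,n},\phi_{2,n};\bm\theta_n)=-(\phi_{1,n}-\phi_1^*)\,l(\phi_{1,n},\phi_{2,n};\bm\theta_n)$ with $l\ge\bar c>0$ by \eqref{eq_h1_l_lower_bound_app}, and its size is controlled by $|h_1|\le c(1+|\phi_{1,n}|)e^{c|\phi_{1,n}|^2}$ from \eqref{eq_h1n_upper_app}. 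Since $c_{1,n}\uparrow\infty$ while $\phi_1^*$ is a fixed constant, there is an $n_0$ with $\phi_1^*\in K_{1,n}=[-c_{1,n},c_{1,n}]$ for all $n\ge n_0$; as this interval is convex and contains $\phi_1^*$, the projection $\Pi_{K_{1,n+1}}$ is nonexpansive toward $\phi_1^*$, so for $n\ge n_0$ one has the pathwise bound $V_{n+1}\le\bigl(\phi_{1,n}-\phi_1^*+a_n[h_1+\xi_{1,n}]\bigr)^2$. (Dropping the finitely many indices $n<n_0$ does not affect almost-sure convergence.)

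Next I would expand this square and take the conditional expectation given $\g_n$. Using $\E[\xi_{1,n}\mid\g_n]=\beta_{1,n}$ and the restoring-force identity, the cross term yields the crucial negative drift $-2a_n l\,V_n\le-2a_n\bar c\,V_n$ together with a bias term $2a_n(\phi_{1,n}-\phi_1^*)\beta_{1,n}$, which I would tame via $|\phi_{1,n}-\phi_1^*|\le 1+V_n$, splitting it into a summable piece $2a_n|\beta_{1,n}|$ and a piece $2a_n|\beta_{1,n}|\,V_n$ of Robbins--Siegmund form $\alpha_n V_n$ with $\sum_n\alpha_n<\infty$ by assumption (i). The quadratic term contributes $a_n^2\bigl[(h_1+\beta_{1,n})^2+\operatorname{Var}(\xi_{1,n}\mid\g_n)\bigr]$. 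This reduces the recursion to the canonical almost-supermartingale inequality
\[
\E[V_{n+1}\mid\g_n]\le(1+\alpha_n)V_n-2a_n\bar c\,V_n+\zeta_n,
\]
where $\zeta_n$ is the $\g_n$-measurable remainder collecting $2a_n|\beta_{1,n}|$ and the two $a_n^2$ fluctuation terms.

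The heart of the argument, and the step I expect to be the main obstacle, is verifying $\sum_n\zeta_n<\infty$ almost surely, i.e.\ that the noise and drift fluctuations are square-summable despite their exponential dependence on the iterate. Here I would exploit the projection bound $|\phi_{1,n}|\le c_{1,n}$ to replace every occurrence of $|\phi_{1,n}|$ by the deterministic radius $c_{1,n}$: the drift contributes $a_n^2 h_1^2\le c\,a_n^2(1+c_{1,n}^2)e^{cc_{1,n}^2}$, while the variance bound \eqref{eq:phi1_noise_assumption} gives $a_n^2\operatorname{Var}(\xi_{1,n}\mid\g_n)\le c\,a_n^2 b_n(1+c_{1,n}^8+(\log b_n)^8)e^{cc_{1,n}^6}$. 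Absorbing the polynomial prefactors into the exponentials, both are dominated by deterministic series of exactly the form in assumption (ii) (with $q=0,\,q_3=2$ and with $q=1,\,q_2=8,\,q_3=6$ respectively), hence summable; the remaining bias pieces $2a_n|\beta_{1,n}|$ and $a_n^2\beta_{1,n}^2\le a_n|\beta_{1,n}|$ (eventually, since $a_n|\beta_{1,n}|\to0$) are summable by (i). It is precisely the double-logarithmic growth of $c_{1,n}$ that keeps $e^{cc_{1,n}^6}$ at a polylogarithmic size, so that the $a_n^2 b_n$ factor (of order $n^{-5/4}$) can overwhelm it.

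Finally, with $\sum_n\alpha_n<\infty$ and $\sum_n\zeta_n<\infty$ a.s., the Robbins--Siegmund theorem guarantees that $V_n$ converges a.s.\ to a finite limit $V_\infty\ge0$ and that $\sum_n a_n\bar c\,V_n<\infty$ a.s. Since $\sum_n a_n=\infty$ by (i), a strictly positive limit $V_\infty>0$ would force $\sum_n a_n V_n=\infty$, a contradiction; hence $V_\infty=0$ almost surely, which is exactly $\phi_{1,n}\to\phi_1^*$ a.s. Uniqueness of the equilibrium is automatic, as $h_1$ vanishes only at $\phi_1^*$ because $l\ge\bar c>0$.
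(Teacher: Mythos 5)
Your proposal is correct and follows essentially the same route as the paper: a projected stochastic-approximation recursion for $(\phi_{1,n}-\phi_1^*)^2$, nonexpansiveness of the projection once $\phi_1^*\in K_{1,n}$, the restoring-force identity \eqref{eq_h1_app} with the lower bound \eqref{eq_h1_l_lower_bound_app}, replacement of $|\phi_{1,n}|$ by $c_{1,n}$ to make the remainder deterministic and summable via assumption (ii), and the Robbins--Siegmund theorem followed by the $\sum_n a_n=\infty$ contradiction to force the limit to zero. The only cosmetic difference is that you fold the lower bound $l\ge\bar c$ into the drift before invoking Robbins--Siegmund, whereas the paper keeps $-2a_n\langle U_{1,n},h_1\rangle$ as the nonnegative term and lower-bounds it afterwards.
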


	\begin{proof}
		The main idea is to derive inductive upper bounds of $|\phi_{1,n} - \phi_1^*|^2$, namely, we will bound $|\phi_{1,n+1} - \phi_1^*|^2$ in terms of $|\phi_{1,n} - \phi_1^*|^2$.
		
		First, for any closed, convex set $K\subset \mathbb{R}$ and $x\in K, y\in \mathbb{R}$, it follows from a property of projection that the function $f(t)=|t\Pi_K(y) + (1-t) x - y|^2$, $t\in\mathbb{R}$,  achieves minimum at $t=1$. 
		The first-order condition at $t = 1$ then yields
		\[ 2 |\Pi_K(y) - y|^2 - 2\langle \Pi_K(y) - y, x - y \rangle = 0 .\]
		Therefore,
		\[
		\begin{aligned}
			|\Pi_K(y) - x|^2 =& |\Pi_K(y) - y + y - x |^2 = |y-x|^2 + | \Pi_K(y) - y|^2 \\
			&+ 2\langle  \Pi_K(y) - y, y - x \rangle \\
			=& |y - x|^2 - |\Pi_K(y) - y|^2 \leq |y-x|^2 .
		\end{aligned}
		\]
		
		Taking $n$ sufficiently large such that $\phi_1^*\in K_{1, n+1}$, we have
		\[ |\phi_{1, n+1} - \phi_1^*|^2 \leq |\phi_{1, n} + a_n[ h_1(\phi_{1,n}, \phi_{2,n}; \bm\theta_n) + \xi_{1, n}] - \phi_1^*|^2.\]
		
		Denoting $U_{1,n} = \phi_{1,n} - \phi_1^*$, we deduce
		\[\begin{aligned}
			& \E\left[|U_{1,n+1}|^2 \Big| \g_n \right] \\
			\leq & \E\left[| U_{1,n} +  a_n[ h_1(\phi_{1,n}, \phi_{2,n}; \bm\theta_n) + \xi_{1,n}] |^2 \Big| \g_n \right] \\
			= & |U_{1,n}|^2 + 2a_n \langle U_{1,n},  h_1(\phi_{1,n}, \phi_{2,n}; \bm\theta_n) +  \beta_{1,n} \rangle + a_n^2 \E\left[ | h_1(\phi_{1,n}, \phi_{2,n}; \bm\theta_n) + \xi_{1,n}|^2  \Big| \g_n \right] \\
			= & |U_{1,n}|^2 + 2a_n \langle U_{1,n},  h_1(\phi_{1,n}, \phi_{2,n}; \bm\theta_n) +  \beta_{1,n} \rangle \\
			&+ a_n^2 \E\left[ | h_1(\phi_{1,n}, \phi_{2,n}; \bm\theta_n) + (\xi_{1,n} -  \beta_{1,n}) +  \beta_{1,n}|^2  \Big| \g_n \right] \\
			\leq & |U_{1,n}|^2 + 2a_n \langle U_{1,n},  h_1(\phi_{1,n}, \phi_{2,n}; \bm\theta_n) \rangle + 2 a_n | \beta_{1,n}| |U_{1,n}| \\
			&+ 3a_n^2 \left( | h_1(\phi_{1,n}, \phi_{2,n}; \bm\theta_n)|^2 + | \beta_{1,n}|^2 +  \E\left[ \left|\xi_{1,n} -  \beta_{1,n} \right|^2 \Big| \g_n\right] \right)  \\
			\leq & |U_{1,n}|^2 + 2a_n \langle U_{1,n},  h_1(\phi_{1,n}, \phi_{2,n}; \bm\theta_n) \rangle + a_n | \beta_{1,n}| (1 +|U_{1,n}|^2) \\
			&+ 3a_n^2 \left( | h_1(\phi_{1,n}, \phi_{2,n}; \bm\theta_n)|^2 + | \beta_{1,n}|^2 +  \E\left[ \left|\xi_{1,n} -  \beta_{1,n} \right|^2 \Big| \g_n\right] \right) .
		\end{aligned}\]
		
		Recall that $|\phi_{1,n}| \leq c_{1, n}$ almost surely. By \eqref{eq_h1n_upper_app}, we have
		\[
		|h_1(\phi_{1,n}, \phi_{2,n}; \bm\theta_n)|^2 \leq c (1+|\phi_{1,n}|)^2e^{2c|\phi_{1,n}|^2} \leq c(1+c_{1,n}^2)e^{cc_{1,n}^2}.
		\]
		
		In addition, the assumption \eqref{eq:phi1_noise_assumption} yields
		\[
		\begin{aligned}
			\E\left[ \left|\xi_{1, n} - \beta_{1, n} \right|^2 \Big| \g_n \right]  \leq& c  b_n( 1 +  |\phi_{1,n}|^{8} + (\log b_n)^8) \exp{\{c|\phi_{1,n}|^6\}}\\
			\leq &c  b_n( 1 +  c_{1,n}^{8} + (\log b_n)^8) \exp{\{cc_{1,n}^6\}}.
		\end{aligned}
		\]
		Therefore,
		\[\begin{aligned}
			& \E\left[|U_{1,n+1}|^2 \Big| \g_n \right] \\
			\leq & |U_{1,n}|^2 + 2a_n \langle U_{1,n},  h_1(\phi_{1,n}, \phi_{2,n}; \bm\theta_n) \rangle + a_n | \beta_{1,n}| (1 +|U_{1,n}|^2) \\
			&+ 3a_n^2 \left( | h_1(\phi_{1,n}, \phi_{2,n}; \bm\theta_n)|^2 + | \beta_{1,n}|^2 +  \E\left[ \left|\xi_{1,n} -  \beta_{1,n} \right|^2 \Big| \g_n\right] \right) \\
			\leq & (1 + a_n |\beta_{1,n}|)|U_{1,n}|^2 + 2a_n \langle U_{1,n},  h_1(\phi_{1,n}, \phi_{2,n}; \bm\theta_n) \rangle + a_n | \beta_{1,n}|\\
			&+3a_n^2\bigg( c(1+c_{1,n}^2)e^{cc_{1,n}^2} + | \beta_{1,n}|^2 + c  b_n( 1 +  c_{1,n}^{8} + (\log b_n)^8) \exp{\{cc_{1,n}^6\}}\bigg)\\
			=&: (1 + \kappa_{1,n}) |U_{1,n}|^2  - \zeta_{1,n} + \eta_{1,n},
		\end{aligned}\]
		where $\kappa_{1,n} = a_n |\beta_{1,n}|$, $\zeta_{1,n} = -2a_n \langle U_{1,n}, h_1(\phi_{1,n}, \phi_{2,n}; \bm\theta_n) \rangle $, and
		\begin{equation}
			\label{eq:eta_1_n}
			\begin{aligned}
				\eta_{1,n} = a_n|\beta_{1,n}| + 3a_n^2\bigg(& c(1+c_{1,n}^2)e^{cc_{1,n}^2} + | \beta_{1,n}|^2 \\
				&+ c  b_n( 1 +  c_{1,n}^{8} + (\log b_n)^8) \exp{\{cc_{1,n}^6\}}\bigg).
			\end{aligned}
		\end{equation}
		
		By assumptions (i)-(ii) of \eqref{eq_phi1_assumption_app}, we know $\sum \kappa_{1,n}<\infty$ and $\sum \eta_{1,n}<\infty$. It then follows from \citep[Theorem 1]{robbins1971convergence} that $\left|U_{1,n}\right|^2$ converges to a finite limit and $\sum \zeta_{1,n}<\infty$ almost surely.
		
		It remains to show $|U_{1,n}| \to 0$ almost surely. It follows from \eqref{eq_h1_app} and \eqref{eq_h1_l_lower_bound_app} that
		\[
		\begin{aligned}
			\zeta_{1,n}&=-2a_n \langle U_{1,n}, h_1(\phi_{1,n}, \phi_{2,n}; \bm\theta_n) \rangle\\
			&=2a_n \langle (\phi_{1,n}-\phi_1^*), l(\phi_{1,n}, \phi_{2,n}; \bm\theta_n) (\phi_{1,n}-\phi_1^*) \rangle\\
			&= 2a_n (\phi_{1,n}-\phi_1^*)^\top l(\phi_{1,n}, \phi_{2,n}; \bm\theta_n) (\phi_{1,n}-\phi_1^*) \\
			&\geq 2\bar{c} a_n |\phi_{1,n}-\phi_1^*|^2.
		\end{aligned}
		\]
		
		Now, suppose $|U_{1,n}|^2 \rightarrow c$ almost surely, where $c$ is not almost surely 0. Then, there exists a measurable set $Z \in \mathcal{F}$ with $\mathbb{P}(Z) >0$ such that, for every $\omega \in Z$, there exists a constant $0<\delta(\omega)<c(\omega)$ satisfying
		\[
		|U_{1,n}(\omega)|^2 = |\phi_{1,n}(\omega) - \phi_1^*|^2 \geq c(\omega)-\delta(\omega) > 0 \quad \text{for sufficiently large } n.
		\]
		Thus,
		\[
		\sum \zeta_{1,n}(\omega) \geq \sum 2\bar{c} a_n |\phi_{1,n}(\omega)-\phi_1^*|^2 \geq 2\bar{c}(c(\omega)-\delta(\omega))\sum  a_n =\infty.
		\]
		This is a contradiction.
		Therefore, we have proved that $|U_{1,n}|$ converges to 0 almost surely, or $\phi_{1,n}$ converges to $\phi_1^*$ almost surely.
	\end{proof}
	
	
	\begin{remark}
		\label{remark_example_1}
		An instance satisfying the assumptions in \eqref{eq_phi1_assumption_app} is $a_n =1\wedge\frac{\alpha^{\frac{3}{4}}}{(n+\beta)^{\frac{3}{4}}}$, where constants $\alpha>0$, $\beta>0$. $b_n = 1\vee\frac{(n+\beta)^{\frac{1}{4}}}{\alpha^{\frac{1}{4}}}, c_{1, n} = 1\vee (\log \log n)^{\frac{1}{6}}, $ and $\beta_{1,n}=0$. This is because $\sum \frac{1}{n} = \infty$, and $\sum \frac{(\log n)^{p} (\log \log n)^{q}}{n^{r}} <\infty$, for any $p, q  > 0$ and $r > 1$.
	\end{remark}
	
	\subsection{Mean-Squared Error of $\phi_{1,n} - \phi_1^*$}
	\label{section_MSE_app}
	
	In this section, we establish the convergence rate of $\phi_{1,n}$ to $ \phi_1^*$ stipulated in part (b) of Theorem \ref{thm_convergence_and_rate_in_main}. 
	
	The following lemma shows a general recursive relation satisfied by some sequences of learning rates.
	\begin{lemma}
		\label{lemma_an_app}
		For any $w> 0$, there exists positive numbers $\alpha > \frac{1}{w}$ and $\beta \geq \max(\frac{1}{w \alpha -1}, w^2\alpha^3)$ such that the sequence $a_n=\frac{\alpha^{\frac{3}{4}}}{(n+\beta)^{\frac{3}{4}}}$ satisfies $a_n \leq a_{n+1}(1 + w a_{n+1})$ for all $n \geq 0$.
	\end{lemma}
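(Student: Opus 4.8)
The plan is to reduce the claimed recursion to a single scalar inequality in $\beta$ and then check that the two stated lower bounds on $\beta$ are exactly what makes it hold. First I would divide the target inequality $a_n\le a_{n+1}(1+w a_{n+1})$ by the positive quantity $a_{n+1}$ and set $s:=n+\beta\ge\beta$, turning it into $\big(1+\tfrac1s\big)^{3/4}-1\le w\alpha^{3/4}(s+1)^{-3/4}$, which must hold for every $s\ge\beta$ (equivalently every $n\ge0$).

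Next I would control the left-hand side by concavity of $t\mapsto t^{3/4}$: since $(1+x)^{3/4}\le 1+\tfrac34 x$ for $x\ge0$, the left-hand side is at most $\tfrac{3}{4s}$, so it suffices to prove $\tfrac34(s+1)^{3/4}\le w\alpha^{3/4}s$ for all $s\ge\beta$. I would then observe that $s\mapsto (s+1)^{3/4}/s$ is decreasing on $(0,\infty)$ (an elementary derivative check, or the substitution $m=s+1$ reducing to monotonicity of $m^{3/4}/(m-1)$), so the binding case is $s=\beta$, i.e. $n=0$. This collapses everything to the single inequality $\tfrac34(\beta+1)^{3/4}\le w\alpha^{3/4}\beta$.

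Finally I would feed in the two hypotheses. The condition $\alpha>1/w$ gives $w\alpha>1$, so $w\alpha-1>0$ and the threshold $\frac{1}{w\alpha-1}$ is well defined; the bound $\beta\ge\frac{1}{w\alpha-1}$ rearranges to $\beta+1\le w\alpha\beta$, whence $(\beta+1)^{3/4}\le (w\alpha)^{3/4}\beta^{3/4}$. Substituting, it then suffices that $\tfrac34(w\alpha)^{3/4}\beta^{3/4}\le w\alpha^{3/4}\beta$, which simplifies to $\beta\ge (3/4)^4 w^{-1}=\frac{81}{256w}$; and this last requirement follows from $\beta\ge w^2\alpha^3$ together with $(w\alpha)^3>1$, since $w^2\alpha^3\ge\frac{81}{256w}\Leftrightarrow (w\alpha)^3\ge\frac{81}{256}$. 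Thus both thresholds are genuinely used: one absorbs the $+1$ shift and the other matches the sublinear term $(\beta+1)^{3/4}$ against the linear term $w\alpha^{3/4}\beta$.

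I expect the main obstacle to be the bookkeeping in this last step: arranging the chain of elementary inequalities so that precisely the two stated thresholds (and not looser ones) suffice, and in particular verifying the monotonicity claim so that checking the single index $n=0$ legitimately covers all $n\ge0$. The analytic content is light; the care is entirely in tracking the constants and in not inadvertently reversing an inequality when passing between the $s$-variable and the exponent $3/4$.
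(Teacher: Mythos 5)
Your proof is correct and follows essentially the same route as the paper's: both reduce the claim to the single case $n=0$ via a monotonicity observation and then verify that case using the first-order bound $(1+x)^{3/4}\le 1+\tfrac{3}{4}x$ (the paper uses the mean value theorem for the same estimate) together with the two hypotheses on $\beta$, with $\beta\ge\frac{1}{w\alpha-1}$ absorbing the $+1$ shift and $\beta\ge w^2\alpha^3$ matching the remaining powers. The only differences are cosmetic: you divide by $a_{n+1}$ and apply the concavity bound before passing to the endpoint, whereas the paper reduces to $n=0$ first, and your final constant chase ($w\beta\ge 81/256$) is a slightly different but equally valid packaging of the paper's $w\beta^4\ge(\beta+1)^3$.
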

	
	\begin{proof}
		We have the following equivalences:
		\begin{equation}
			\label{eq_lemma_an}
			\begin{aligned}
				&a_n \leq a_{n+1}(1 + w a_{n+1})\\
				\Leftrightarrow &\frac{\alpha^{\frac{3}{4}}}{(n + \beta)^{\frac{3}{4}}} \leq \frac{\alpha^{\frac{3}{4}}}{(n + 1 + \beta)^{\frac{3}{4}}} + w \left(\frac{\alpha}{n + 1 + \beta}\right)^{\frac{2}{3}}\\
				\Leftrightarrow & (n+\beta+1)^{\frac{3}{4}} - (n+\beta)^{\frac{3}{4}} \leq w \alpha^{\frac{3}{4}} \biggl( \frac{n+\beta}{n+\beta+1} \biggl)^{\frac{3}{4}}.
			\end{aligned}
		\end{equation}
		Consider the last inequality in \eqref{eq_lemma_an} and notice that the left hand side is a decreasing function of $n$ and the right hand side is an increasing function of $n$. So to show that this inequality is true for all $n$, it is sufficient to show that it is true when $n=0$, which is
		\begin{equation}\label{desired}
			(\beta+1)^{\frac{3}{4}} - \beta^{\frac{3}{4}} \leq w\alpha^{\frac{3}{4}} \frac{\beta^{\frac{3}{4}}}{(\beta+1)^{\frac{3}{4}}}.
		\end{equation}
		
		To this end, it follows from $\beta\geq w^2\alpha^3$ and $w\alpha\beta\geq \beta+1$ that
		\[
		w \beta^4\geq w^3\alpha^3\beta^3\geq (\beta+1)^3.
		\]
		Hence
		\[
		w^{\frac{1}{4}} \frac{\beta^{\frac{3}{4}}}{(\beta+1)^{\frac{3}{4}}}
		\geq \frac{3}{4}\beta^{-\frac{1}{4}} \geq
		(\beta+1)^{\frac{3}{4}} - \beta^{\frac{3}{4}},
		\]
		where the last inequality is due to the mean value theorem.
		Now the desired inequality \eqref{desired} follows from the fact that $\alpha > \frac{1}{w}$.
	\end{proof}
	
	The following result covers part (b) of Theorem \ref{thm_convergence_and_rate_in_main} as a special case. 
	
	\begin{theorem}
		\label{thm_phi_1_rate}
		Under the assumption of Theorem \ref{thm:phi1_convergence}, if the sequence $\{a_n\}$ further satisfies
		\[a_n \leq a_{n+1}(1 + w a_{n+1})\] for some sufficiently small constant $w>0$  and $\{\frac{b_n}{a_n}|\beta_{1,n}|^2\}$ is non-decreasing in $n$,
		then there exists an increasing sequence $\{\hat{\eta}_{1,n}\}$ 
		and a constant ${c}'>0$ such that
		\[
		\E [|\phi_{1, n+1} - \phi_1^*|^2] \leq {c}' a_n \hat{\eta}_{1,n}.
		\]
		In particular, if we set the parameters $a_n, b_n, c_{1,n}, \beta_{1,n}$ as in Remark \ref{remark_example_1}, then
		\[
		\E [|\phi_{1, n+1} - \phi_1^*|^2] \leq c \frac{(1 \vee \log n)^{p} (1 \vee \log \log n)^{\frac{4}{3}}}{n^{\frac{1}{2}}}
		\]
		for any $n$, where $c$ and $p$ are positive constants that only depend on model primitives.
	\end{theorem}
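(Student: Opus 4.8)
The plan is to pass from the one-step conditional inequality obtained in the proof of Theorem~\ref{thm:phi1_convergence} to a deterministic recursion for $e_n:=\E[|U_{1,n}|^2]$ and then solve that recursion by induction. Taking expectations in $\E[|U_{1,n+1}|^2\mid\g_n]\leq(1+\kappa_{1,n})|U_{1,n}|^2-\zeta_{1,n}+\eta_{1,n}$ and invoking the lower bound $l(\phi_{1,n},\phi_{2,n};\bm\theta_n)\geq\bar c$ from \eqref{eq_h1_l_lower_bound_app}, which yields $\zeta_{1,n}\geq 2\bar c a_n|U_{1,n}|^2$, gives
\[
e_{n+1}\leq(1+\kappa_{1,n}-2\bar c a_n)\,e_n+\eta_{1,n}.
\]
Since $\kappa_{1,n}=a_n|\beta_{1,n}|$ with $|\beta_{1,n}|\to0$, for all large $n$ the factor is at most $(1-\bar c a_n)$, so the recursion is genuinely contractive at rate $a_n$. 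I then set $\hat\eta_{1,n}:=\eta_{1,n}/a_n^2$ (replaced by an increasing majorant if needed), so the additive term equals $a_n^2\hat\eta_{1,n}$, and check monotonicity of $\hat\eta_{1,n}$: apart from the bias contribution discussed below, its terms are constant multiples of $b_n(1+c_{1,n}^8+(\log b_n)^8)e^{cc_{1,n}^6}$, which increase once $b_n\uparrow$ and $c_{1,n}\uparrow$.

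The heart of the argument is the induction $e_{n+1}\leq c'a_n\hat\eta_{1,n}$. Assuming $e_n\leq c'a_{n-1}\hat\eta_{1,n-1}$ and using $\hat\eta_{1,n-1}\leq\hat\eta_{1,n}$, substitution into the recursion gives
\[
e_{n+1}\leq(1-\bar c a_n)c'a_{n-1}\hat\eta_{1,n}+a_n^2\hat\eta_{1,n}=\hat\eta_{1,n}\big[(1-\bar c a_n)c'a_{n-1}+a_n^2\big].
\]
Feeding in the step-size ratio inequality $a_{n-1}\leq a_n(1+wa_n)$ supplied by Lemma~\ref{lemma_an_app} and expanding, the bracket is at most $c'a_n\big(1+(w-\bar c)a_n+O(a_n^2)\big)+a_n^2$. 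After dividing by $a_n^2$, the target $e_{n+1}\leq c'a_n\hat\eta_{1,n}$ reduces to $c'(w-\bar c)+1+c'\,O(a_n)\leq0$; choosing $w<\bar c$ and $c'\geq1/(\bar c-w)$ makes the constant part strictly negative, and the $O(a_n)$ remainder is absorbed for all large $n$. The finitely many initial indices are handled by enlarging $c'$, since each $e_n$ is bounded by the projection radius $(c_{1,n}+|\phi_1^*|)^2$.

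For the bias, when $\beta_{1,n}\neq0$ I would treat the cross term $2a_n\langle U_{1,n},\beta_{1,n}\rangle$ with a weighted Young's inequality of weight of order $a_n/b_n$, rather than the crude $2|U_{1,n}|\leq1+|U_{1,n}|^2$ used for almost-sure convergence: the $|U_{1,n}|^2$ part is then absorbed into the contraction (using $b_n\geq1/\bar c$ eventually) while the additive part contributes a term of order $a_nb_n|\beta_{1,n}|^2$ to $\eta_{1,n}$, i.e. $\tfrac{b_n}{a_n}|\beta_{1,n}|^2$ to $\hat\eta_{1,n}$ --- exactly the quantity the hypothesis requires to be increasing, which secures the monotonicity of $\hat\eta_{1,n}$. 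Finally, the ``in particular'' statement follows by inserting $a_n\sim n^{-3/4}$, $b_n\sim n^{1/4}$ and $c_{1,n}=1\vee(\log\log n)^{1/6}$ into $a_n\hat\eta_{1,n}$: the dominant term $a_nb_n(1+c_{1,n}^8+(\log b_n)^8)e^{cc_{1,n}^6}$ evaluates to $n^{-1/2}$ times powers of $\log n$ (from $(\log b_n)^8$ and $e^{cc_{1,n}^6}=(\log n)^{c}$) and the factor $c_{1,n}^8=(\log\log n)^{4/3}$, yielding the stated $n^{-1/2}(\log n)^p(\log\log n)^{4/3}$. I expect the main obstacle to be closing the inductive step: making the $a_n$-order contraction dominate the $a_{n-1}/a_n\leq1+wa_n$ loss from the slowly varying step sizes, which forces the joint choice of a small $w$ and a correspondingly large $c'$ and relies on $\hat\eta_{1,n}$ being genuinely monotone so that $\hat\eta_{1,n-1}$ may be replaced by $\hat\eta_{1,n}$.
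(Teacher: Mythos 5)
Your proposal is correct and follows essentially the same route as the paper's proof: the same contractive recursion $\rho_{n+1}\leq(1-\bar c a_n)\rho_n+O(a_n^2)\hat\eta_{1,n}$ obtained by absorbing the bias cross term via a weighted Young's inequality (the paper uses a constant weight $\bar c$ where you use one of order $1/b_n$, but both land on the same $\tfrac{b_n}{a_n}|\beta_{1,n}|^2$ contribution to the increasing majorant $\hat\eta_{1,n}$), the same induction $\rho_{n+1}\leq c'a_n\hat\eta_{1,n}$ closed by the step-size inequality $a_{n-1}\leq a_n(1+wa_n)$ with $w<\bar c-O(1/c')$, and the same final substitution of the specific sequences. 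The only cosmetic differences are that the paper makes the extra $O(a_n)$ term in the bracket explicitly negative ($-\bar c w a_n$, so no "large $n$" caveat is needed) and fixes $c'$ up front to cover the indices before $\phi_1^*$ enters the projection set.
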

	
	\begin{proof}
		Denote $n_0=\inf \{n\geq0:\phi_1^*\in K_{1, n+1}\}$ and $U_{1, n} = \phi_{1, n} - \phi_1^*$. It follows from  \eqref{eq_h1_app} and \eqref{eq_h1_l_lower_bound_app} that
		\begin{equation}
			\label{eq:inner_product_upper}
			\langle U_{1,n}, h_1(\phi_{1,n}, \phi_{2,n}; \bm\theta_n)\rangle = - U_{1,n}^\top l(\phi_{1,n}, \phi_{2,n}; \bm\theta_n) U_{1,n}  \leq  -\bar{c} |U_{1,n}|^2.
		\end{equation}
		When $n \geq n_0$, this together with a similar argument to the proof of Theorem \ref{thm:phi1_convergence} yields
		\begin{equation}
			\label{eq:phi1_rate(t)emp}
			\begin{aligned}
				& \E\left[|U_{1,n+1}|^2 \Big| \g_n \right] \\
				\leq & |U_{1,n}|^2 + 2a_n \langle U_{1,n}, h_1(\phi_{1,n}, \phi_{2,n}; \bm\theta_n)\rangle  + 2 a_n | \beta_{1,n}| |U_{1,n}| \\
				&+ 3a_n^2 \left( | h_1(\phi_{1,n}, \phi_{2,n}; \bm\theta_n)|^2 + | \beta_{1,n}|^2 +  \E\left[ \left|\xi_{1,n} -  \beta_{1,n} \right|^2 \Big| \g_n\right] \right)  \\
				\leq & |U_{1,n}|^2 - 2a_n \bar{c} |U_{1,n}|^2 + 2 a_n | \beta_{1,n}| |U_{1,n}| \\
				&+ 3a_n^2 \left( | h_1(\phi_{1,n}, \phi_{2,n}; \bm\theta_n)|^2 + | \beta_{1,n}|^2 +  \E\left[ \left|\xi_{1,n} -  \beta_{1,n} \right|^2 \Big| \g_n\right] \right)  \\
				\leq & |U_{1,n}|^2 - 2a_n \bar{c} |U_{1,n}|^2 + a_n (\frac{1}{\bar{c}}|\beta_{1,n}|^2 + \bar{c}|U_{1,n}|^2) \\
				&+ 3a_n^2 \left( | h_1(\phi_{1,n}, \phi_{2,n}; \bm\theta_n)|^2 + | \beta_{1,n}|^2 +  \E\left[ \left|\xi_{1,n} -  \beta_{1,n} \right|^2 \Big| \g_n\right] \right)  \\
				=&(1-\bar{c} a_n )|U_{1,n}|^2 + 3a_n^2\bigg( | h_1(\phi_{1,n}, \phi_{2,n}; \bm\theta_n)|^2 + (1 + \frac{1}{3\bar{c} a_n})|\beta_{1,n}|^2 \\
				&+  \E\left[ \left|\xi_{1,n} -  \beta_{1,n} \right|^2 \Big| \g_n\right] \bigg).
			\end{aligned}
		\end{equation}
		Now, by the proof of Theorem \ref{thm:phi1_convergence},
		\[
		\begin{aligned}
			&| h_1(\phi_{1,n}, \phi_{2,n}; \bm\theta_n)|^2 +  \E\left[ \left|\xi_{1,n} -  \beta_{1,n} \right|^2 \Big| \g_n\right]\\
			\leq& c \biggl( (1+c_{1,n}^2)e^{cc_{1,n}^2} +  b_n( 1 +  c_{1,n}^{8} + (\log b_n)^8) \exp{\{cc_{1,n}^6\}} \biggl)\\
			\leq& c  b_n ( 1 +  c_{1,n}^{8} + (\log b_n)^8) \exp{\{cc_{1,n}^6\}}.
		\end{aligned}
		\]
		Moreover, the assumptions in \eqref{eq_phi1_assumption_app} imply that  $(1 + \frac{1}{3\bar{c} a_n})|\beta_{1,n}|^2 \leq c \frac{b_n}{a_n}|\beta_{1,n}|^2$ for some constant $c>0$. 
		When $n \geq n_0$, it follows from \eqref{eq:phi1_rate(t)emp} that
		\[ \E\left[|U_{1,n+1}|^2 \Big| \g_n \right] \leq (1-\bar{c} a_n)|U_{1,n}|^2 + 3a_n^2 \hat{\eta}_{1,n}, \]
		where
		\begin{equation}
			\label{eq_eta_hat_app}
			\begin{aligned}
				\hat{\eta}_{1,n}=& c b_n ( 1 + c_{1,n}^{8} + (\log b_n)^8 + \frac{|\beta_{1,n}|^2}{a_n} ) \exp{\{cc_{1,n}^6\}},
			\end{aligned}
		\end{equation}
		which is monotonically increasing because $c_{1,n}$, $b_n$ are monotonically increasing and $\frac{b_n}{a_n}|\beta_{1,n}|^2$ is non-decreasing by the assumptions. Taking expectation on both sides of the above and
		denoting $\rho_n = \E[|U_{1, n}|^2]$, we get
		\begin{equation}
			\label{eq:phi1_rate_induction}
			\rho_{n+1} \leq (1-\bar{c} a_n)\rho_n + 3a_n^2 \hat{\eta}_{1,n}
		\end{equation}
		when $n\geq n_0$.
		
		Next, we show $\rho_{n+1} \leq c'a_n\hat{\eta}_{1,n}$ for all $n \geq 0$ by induction, where\\
		$c' = max\{ \frac{\rho_1}{a_0 \hat{\eta}_{1,0}},  \frac{\rho_2}{a_1 \hat{\eta}_{1,1}}, \cdots,  \frac{\rho_{n_0+1}}{a_{n_0} \hat{\eta}_{1,{n_0}}}, \frac{3}{\bar{c}} \} + 1$.
		Indeed, it is true when $n\leq n_0$.  Assume that $\rho_{k+1} \leq c'a_k \hat{\eta}_{1,k}$ is true for $n_0 \leq k \leq n-1$. 
		Then \eqref{eq:phi1_rate_induction} yields
		\[
		\begin{aligned}
			\rho_{n+1} &\leq (1-\bar{c} a_n)\rho_n + 3a_n^2 \hat{\eta}_{1,n}\\
			&\leq (1-\bar{c}a_n)c'a_{n-1} \hat{\eta}_{1,n-1} + 3a_n^2 \hat{\eta}_{1,n} \\
			&\leq (1-\bar{c} a_n)c'a_{n}(1 + w a_n) \hat{\eta}_{1,n} + 3a_n^2 \hat{\eta}_{1,n} \\
			&=c'a_n\hat{\eta}_{1,n} + c'\hat{\eta}_{1,n}a_n^2 \biggl(w - \bar{c} - \bar{c} wa_n + \frac{3}{c'} \biggl).
		\end{aligned}
		\]
		Consider the function
		\[
		{f}(x) = c'\hat{\eta}_{1,n}x^2 \biggl(w - \bar{c} - \bar{c}wx + \frac{3}{c'} \biggl),
		\]
		which has two roots at $x_{1,2}=0$ and one root at $x_3=\frac{w - (\bar{c} - \frac{3}{c'})}{cw}$.
		Because  $\bar{c} - \frac{3}{c'} > 0$, we can  choose $0<w < \bar{c} - \frac{3}{c'}$ so that $x_3 < 0$.
		So ${f}(x) < 0$ when $x > 0$, leading to
		\[ c'\hat{\eta}_{1,n}a_n^2 \biggl(w- \bar{c} - \bar{c}wa_n + \frac{3}{c'} \biggl) < 0, \;\;\forall n
		\]
		because $a_n>0$. We have now proved $\E [|U_{1, n+1}|^2] \leq c' a_n \hat{\eta}_{1,n}$.
		
		In particular, under the setting of Remark \ref{remark_example_1}, we can verify that $(\frac{b_n}{a_n})|\beta_{1,n}|^2$ is a non-decreasing sequence of $n$, and  $a_n=\Theta(n^{-\frac{3}{4}})$. Then
		\begin{equation}
			\begin{aligned}
				\hat{\eta}_{1,n}=& c b_n ( 1 +  c_{1,n}^{8} + (\log b_n)^8 + \frac{|\beta_{1,n}|^2}{a_n} ) \exp{\{cc_{1,n}^6\}}\\
				\leq& c n^{\frac{1}{4}} (1 + (1 \vee \log \log n)^{\frac{4}{3}} + (0 \vee \log n)^8)(e \vee \log n)^c\\
				\leq& c n^{\frac{1}{4}} (1 \vee \log n)^{p} (1 \vee \log \log n)^{\frac{4}{3}},
			\end{aligned}
		\end{equation}
		where $c$ and $p$ are positive constants.
		The proof is now complete.
	\end{proof}

	\subsection{Analyzing \(\Bar{J}(\phi_1, \phi_2)\)}
	
	Recall that \(\Bar{J}(\phi_1, \phi_2)=J(0,x_0;\pi)\) with $\gamma=0$, where $\pi=\mathcal{N}(\cdot|\phi_1 x, \phi_2)$.
	
	\begin{lemma}
		\label{lemma_j_f_g_main}
		The value function can be expressed as
		\[\Bar{J}(\phi_1, \phi_2) = f(a(\phi_1)) + (\Sj D_j^\top \phi_2 D_j) g(a(\phi_1)),\]
		where $a(\phi_1)$ is defined by \eqref{eq_a_phi_1} and the functions \(f\) and \(g\) are defined as follows:
		\begin{equation}
			\label{eq_f_definition}
			f(a) =
			\begin{cases}
				\frac{x_0^2(-H - QT)}{2} & \text{if } a = 0, \\
				\frac{1}{2a}(Q - e^{aT}Q - He^{aT}a)x_0^2 & \text{if } a \neq 0,
			\end{cases}
		\end{equation}
		\begin{equation}
			\label{eq_g_definition}
			g(a) =
			\begin{cases}
				\frac{T(-2H - QT)}{4} & \text{if } a = 0, \\
				\frac{1}{2a^2}(QTa + Q + Ha - e^{aT}Q - He^{aT}a) & \text{if } a \neq 0.
			\end{cases}
		\end{equation}
	\end{lemma}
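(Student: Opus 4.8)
The plan is to start from the definition \eqref{jbar} of $\Bar J$, insert the closed-form second moment from Lemma \ref{lemma_xt-w}, and exploit the fact that this moment is \emph{affine} in $\phi_2$: the part free of $\phi_2$ will produce $f(a(\phi_1))$ and the coefficient of $\phi_2$ will produce $g(a(\phi_1))$. Writing $a=a(\phi_1)$ and expanding \eqref{jbar} gives
\[
\Bar J(\phi_1,\phi_2) = -\tfrac{1}{2}Q\int_0^T \E[x^\pi(s)^2]\,\dd s - \tfrac{1}{2}H\,\E[x^\pi(T)^2],
\]
and Lemma \ref{lemma_xt-w} supplies $\E[x^\pi(t)^2] = x_0^2 e^{at} + D^2\phi_2\int_0^t e^{a(t-r)}\,\dd r$, which is visibly of the form (constant in $\phi_2$) plus $\phi_2\times$(something). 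This affine structure is the one conceptual point of the argument; everything else is the evaluation of elementary integrals.

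For $a\neq 0$ I would first record $\int_0^t e^{a(t-r)}\,\dd r = (e^{at}-1)/a$, so that $\E[x^\pi(t)^2] = x_0^2 e^{at} + D^2\phi_2\,(e^{at}-1)/a$. Collecting the $\phi_2$-free terms of $\Bar J$ and using $\int_0^T e^{as}\,\dd s = (e^{aT}-1)/a$ yields
\[
f(a) = -\tfrac{1}{2}Q x_0^2\,\frac{e^{aT}-1}{a} - \tfrac{1}{2}H x_0^2 e^{aT} = \frac{1}{2a}\bigl(Q - e^{aT}Q - H e^{aT}a\bigr)x_0^2,
\]
matching \eqref{eq_f_definition}. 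For the coefficient of $\phi_2$ the only mildly delicate step is the iterated integral $\int_0^T \frac{e^{as}-1}{a}\,\dd s = \frac{e^{aT}-1}{a^2}-\frac{T}{a}$; substituting it together with the terminal contribution and placing everything over a common denominator $a^2$ gives
\[
g(a) = -\tfrac{1}{2}QD^2\Bigl(\tfrac{e^{aT}-1}{a^2}-\tfrac{T}{a}\Bigr) - \tfrac{1}{2}HD^2\,\tfrac{e^{aT}-1}{a} = \frac{D^2}{2a^2}\bigl(QTa+Q+Ha-e^{aT}Q-He^{aT}a\bigr),
\]
which is exactly \eqref{eq_g_definition}.

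Finally, for $a=0$ I would substitute $\E[x^\pi(t)^2]=x_0^2+D^2\phi_2 t$ directly into $\Bar J$, obtaining $f(0)=\tfrac{x_0^2(-H-QT)}{2}$ from the $\phi_2$-free part and $g(0)=\tfrac{D^2T(-2H-QT)}{4}$ from the $\phi_2$ coefficient (via $\int_0^T s\,\dd s = T^2/2$). As a consistency check, these equal the $a\to 0$ limits of the formulas above, which can be seen from the Taylor expansion $e^{aT}=1+aT+\tfrac{1}{2}(aT)^2+o(a^2)$ that cancels the apparent $1/a$ and $1/a^2$ singularities. I do not anticipate any genuine obstacle here: the proof is essentially careful bookkeeping, with the two points requiring attention being the correct evaluation of the iterated integral in the $\phi_2$-coefficient and the matching of the two branches at the removable singularity $a=0$.
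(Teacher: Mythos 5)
Your proof is correct, and it takes a genuinely different route from the paper's. The paper proves this lemma by invoking the Feynman--Kac formula: it writes down the linear parabolic PDE satisfied by $J(t,x;\phi_1,\phi_2)$ and then exhibits (without derivation) its explicit solution \eqref{eq_value_v}--\eqref{eq_value_v2}, from which the lemma follows by setting $t=0$, $x=x_0$. You instead work directly from the definition \eqref{jbar}, substitute the closed-form second moment $\E[x^{\bm\phi}(t)^2]=x_0^2e^{at}+D^2\phi_2\int_0^t e^{a(t-r)}\,\dd r$ already established in Lemma \ref{lemma_xt-w}, observe that it is affine in $\phi_2$, and evaluate the resulting elementary integrals. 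Your computations of $f$ and $g$ in both branches ($a\neq 0$ and $a=0$) check out, and the $a\to 0$ consistency check is a nice touch. Your approach is more elementary and self-contained in that it reuses a lemma the paper has already proved rather than asserting a PDE solution that the reader must verify; its only (minor) debt is the interchange of $\E$ and $\int_0^T$, which is justified by the moment bounds of Lemma \ref{lemma_xt-w}. What the paper's route buys in exchange is the full value function $J(t,x;\phi_1,\phi_2)$ at every $(t,x)$, not merely its value at $(0,x_0)$, though only the latter is needed for the lemma as stated.
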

	
	\begin{proof}
		The value function of the policy $\mathcal{N}(\cdot|\phi_1 x, \phi_2)$ with $\gamma=0$ is (with a slight abuse of notation)
		\[
		J(t,x; \phi_1, \phi_2) =  \mathbb{E}\left[ -\frac{1}{2}\int_t^T Qx^{\bm\phi}(s)^2 \dd s - \frac{1}{2}Hx^{\bm\phi}(T)^2 \Big| x^{\bm\phi}(t)=x\right],
		\]
		where $\bm\phi=(\phi_1, \phi_2)$ and $\{x^{\bm\phi}(s):t \leq s \leq T\}$ is the corresponding exploratory state process.
		By the Feynman--Kac formula, $J(\cdot,\cdot; \phi_1, \phi_2)$  satisfies
		\begin{equation}
			\begin{aligned}
				\label{eq_hjb}
				\left\{
				\begin{array}{l}
					v_t+  \frac{1}{2} \int_{\mathcal{R}^l} \sum_{j=1}^m (C_jx + D_j^\top u)^2\mathcal{N}(u|\phi_1 x, \phi_2) \dd u v_{xx} \\
					\qquad + \left(Ax + B^\top \int_{\mathcal{R}^l}u\mathcal{N}(u|\phi_1 x, \phi_2)\dd u\right)v_x - \frac{1}{2}Qx^2 = 0,\\
					v(T,x) = -\frac{1}{2}Hx^2.
				\end{array}
				\right.
			\end{aligned}
		\end{equation}
		The solution to  the above PDE is
		\begin{equation}
			\label{eq_value_v}
			\begin{aligned}
				J(t,x; \phi_1, \phi_2)=&\frac{1}{2}\left[\frac{Q}{a(\phi_1)} - e^{a(\phi_1)(T-t)}(\frac{Q}{a(\phi_1)}+H)\right]x^2 - \frac{1}{2} (\Sj D_j^\top \phi_2 D_j)\\
				& \left[\frac{Q}{a(\phi_1)}t+\frac{e^{a(\phi_1)(T-t)}}{a(\phi_1)}(\frac{Q}{a(\phi_1)}+H) - (\frac{QT}{a(\phi_1)}+\frac{Q}{a(\phi_1)^2}+\frac{H}{a(\phi_1)})\right]
			\end{aligned}
		\end{equation}
		if $a(\phi_1)\neq0$, and
		\begin{equation}
			\label{eq_value_v2}
			\begin{aligned}
				J(t,x; \phi_1, \phi_2)=&\frac{1}{2}(Qt-QT-H)x^2 + \frac{1}{4} (\Sj D_j^\top \phi_2 D_j)(Qt^2-2QTt-2Ht)\\
				&-\frac{1}{4} (\Sj D_j^\top \phi_2 D_j)(QT^2+2HT)
			\end{aligned}
		\end{equation}
		if $a(\phi_1)=0$.
		
		The desired result follows immediately noting that $\Bar{J}(\phi_1, \phi_2)=J(0, x_0;\phi_1, \phi_2)$.
	\end{proof}

	
	\begin{lemma}
		\label{lemma_j_properties}
		Both the functions $f$ and $g$ defined respectively by \eqref{eq_f_definition} and \eqref{eq_g_definition} are continuously differentiable, monotonically non-increasing, and strictly negative everywhere.
	\end{lemma}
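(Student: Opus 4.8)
The plan is to sidestep the apparent singularities at $a=0$ in \eqref{eq_f_definition}--\eqref{eq_g_definition} by returning to the integral representations that produced $f$ and $g$ in the first place. Starting from $\Bar{J}(\phi_1,\phi_2)=-\tfrac{Q}{2}\int_0^T \E[x^{\bm\phi}(s)^2]\,\dd s-\tfrac{H}{2}\E[x^{\bm\phi}(T)^2]$ and inserting the second-moment formula \eqref{x_moment2} of Lemma \ref{lemma_xt-w}, I would split off the part free of $\phi_2$ from the part linear in $\phi_2$. Since that moment is affine in $\phi_2$, matching with $\Bar{J}=f(a(\phi_1))+\phi_2 g(a(\phi_1))$ from Lemma \ref{lemma_j_f_g_main} identifies, with $a=a(\phi_1)$,
\[
f(a)=-\frac{x_0^2}{2}\left(Q\int_0^T e^{as}\,\dd s+He^{aT}\right),\qquad g(a)=-\frac{D^2}{2}\left(Q\int_0^T\!\!\int_0^s e^{a(s-r)}\,\dd r\,\dd s+H\int_0^T e^{a(T-s)}\,\dd s\right).
\]
A one-line evaluation of these elementary integrals (yielding $(e^{aT}-1)/a$ and $(e^{aT}-1-aT)/a^2$ when $a\neq0$, and their polynomial-in-$T$ limits when $a=0$) confirms they agree with \eqref{eq_f_definition}--\eqref{eq_g_definition}. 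All three properties then read off this form.

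For continuous differentiability, the integrands $e^{as},e^{aT},e^{a(s-r)},e^{a(T-s)}$ are smooth in $a$ and the integration domains are compact, so the Leibniz rule applies and $f,g$ are in fact entire, hence $C^1$, on all of $\mathbb{R}$. In particular $a=0$ is not a genuine singularity, and no separate matching of one-sided derivatives at $0$ is required.

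For monotonicity and strict negativity I would differentiate under the integral sign to get
\[
f'(a)=-\frac{x_0^2}{2}\left(Q\int_0^T s e^{as}\,\dd s+HTe^{aT}\right),\qquad g'(a)=-\frac{D^2}{2}\left(Q\int_0^T\!\!\int_0^s (s-r)e^{a(s-r)}\,\dd r\,\dd s+H\int_0^T (T-s)e^{a(T-s)}\,\dd s\right).
\]
As $s$, $T$, $s-r\ge0$ and $T-s\ge0$ on the respective domains while $Q,H\ge0$ and $x_0^2,D^2>0$, each integral is nonnegative, so $f'\le0$ and $g'\le0$. The same sign pattern gives negativity: every exponential integrand is strictly positive on a set of positive measure, so $Q\int_0^T e^{as}\,\dd s+He^{aT}>0$ (and its $g$-analogue is $>0$) as soon as $Q$ and $H$ are not both zero, i.e. in the nondegenerate case; multiplying by the negative prefactors $-\tfrac12 x_0^2$ and $-\tfrac12 D^2$ yields $f(a),g(a)<0$ for every $a$.

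The computation is routine once the integral form is in hand, so the only real decision is the first step: replacing the closed forms by their integral antecedents instead of analyzing them directly. Working from \eqref{eq_f_definition}--\eqref{eq_g_definition} as written, the obstacle would instead be the $C^1$-verification at $a=0$ (needing third-order Taylor expansions of $e^{aT}$ to match $f'(0)$ and $g'(0)$) together with pinning down the sign of $f',g'$ from expressions such as $e^{aT}(1-aT)-1$; the integral representation is exactly what turns these into one-line sign checks.
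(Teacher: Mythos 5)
Your proof is correct, but it takes a genuinely different route from the paper's. The paper works directly with the piecewise closed forms \eqref{eq_f_definition}--\eqref{eq_g_definition}: it invokes L'H\^opital's rule twice (once for continuity at $a=0$ and once to match $f'(0)$ with $\lim_{a\to0}f'(a)$), establishes $f'\leq 0$ via the elementary inequality $1+xe^{x}-e^{x}\leq 0$, and then deduces strict negativity from monotonicity combined with the limits $f,g\to 0$ as $a\to-\infty$ and $f,g\to-\infty$ as $a\to+\infty$. You instead pull $f$ and $g$ back to their integral antecedents,
$f(a)=-\tfrac{x_0^2}{2}\bigl(Q\int_0^T e^{as}\,\dd s+He^{aT}\bigr)$ and the analogous double-integral form for $g$ (which I checked does reproduce \eqref{eq_f_definition}--\eqref{eq_g_definition}, including the $a=0$ branches). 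This buys you three things at once: $f$ and $g$ are visibly entire, so the $C^1$-matching at $a=0$ disappears; $f'\leq0$ and $g'\leq0$ follow from nonnegativity of the integrands $se^{as}$ etc.\ without any auxiliary inequality; and strict negativity is immediate from positivity of the integrands, which is actually tighter than the paper's argument (monotone non-increasing with limit $0$ at $-\infty$ only gives $f\leq 0$ directly; strictness there needs an extra word). One point you handle more carefully than the paper: both arguments silently require that $Q$ and $H$ are not both zero --- otherwise $f\equiv g\equiv 0$ and the ``strictly negative'' claim fails --- and you flag this nondegeneracy explicitly, whereas the paper does not.
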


	\begin{proof}
		First of all, it is straightforward to check by L'H\^opital's rule that both $f$ and $g$ are continuous at $0$; hence they are continuous functions. Next,
		when $a\neq0$,
		\[
		\begin{aligned}
			f'(a) =& -\frac{Qx_0^2(1 + aTe^{aT}-e^{aT}) }{2a^2} - \frac{x_0^2}{2}H T e^{aT} \leq 0,
		\end{aligned}
		\]
		where the inequality is due to the facts that  $1+xe^x-e^x \geq 0$ $\forall x$ and that $Q, H \geq 0$. Moreover, again by L'H\^opital's rule we obtain
		\[ f'(0)=-\frac{T x_0^2 (2H + QT)}{4}=\lim_{a \to 0}f'(a),\]
		implying that $f$ is continuously differentiable at 0, and hence continuously differentiable everywhere and monotonically non-increasing. Similarly we can prove that $g$ is also continuously differentiable everywhere and monotonically non-increasing.

		Finally, it is evident that
		\[
		\begin{aligned}
			&\lim_{a \to -\infty} f(a)=\lim_{a \to -\infty} g(a) = 0,\;\;\;
			\lim_{a \to \infty} f(a) = \lim_{a \to \infty} g(a) = -\infty.
		\end{aligned}
		\]
		Thus, in view of the proved monotonicity,  we have $f(a)<0$ and $g(a)<0$ for any $a$.
	\end{proof}
	
	
		
	
\end{document}